
\documentclass[nohyperref]{article}

\usepackage{microtype}
\usepackage{graphicx}
\usepackage{subfigure}
\usepackage{booktabs} 

\usepackage{hyperref}



\usepackage[accepted]{icml2022}

\usepackage{amsmath}
\usepackage{amssymb}
\usepackage{mathtools}
\usepackage{amsthm}
\usepackage{bbm}
\usepackage[capitalize,noabbrev]{cleveref}
\def\Real{\mathop{\mathbb{R}}\nolimits}

\def\argmin{\mathop{\rm argmin}\nolimits}

\newcommand{\bd}{\boldsymbol{d}}

\newcommand{\bs}{\boldsymbol{s}}

\newcommand{\bu}{\boldsymbol{u}}

\newcommand{\bx}{\boldsymbol{x}}
\newcommand{\by}{\boldsymbol{y}}

\newcommand{\bU}{\boldsymbol{U}}
\newcommand{\bV}{\boldsymbol{V}}

\newcommand{\bX}{\boldsymbol{X}}

\newcommand{\btheta}{\boldsymbol{\theta}}

\newcommand{\bTheta}{\boldsymbol{\Theta}}

\newcommand{\E}{\mathbb{E}}
\newcommand{\F}{\mathcal{F}}

\newcommand{\hth}{\widehat{\bTheta}_n}

\usepackage{bbm}
\newcommand{\C}{\mathscr{C}}
 \newcommand{\sP}{\mathbbm{P}}
 \newcommand{\tth}{\bTheta_\ast}
\usepackage{mathrsfs}
\newcommand{\one}{\mathbbm{1}}
\theoremstyle{plain}
\newtheorem{theorem}{Theorem}[section]
\newtheorem{proposition}[theorem]{Proposition}
\newtheorem{lemma}[theorem]{Lemma}

\theoremstyle{definition}
\newtheorem{definition}[theorem]{Definition}
\newtheorem{assumption}{A}
\theoremstyle{remark}

\usepackage{color}

\usepackage[textsize=tiny]{todonotes}

\icmltitlerunning{Bregman Power $k$-Means}

\begin{document}

\twocolumn[
\icmltitle{Bregman Power \textit{k}-Means for Clustering Exponential Family Data}




\begin{icmlauthorlist}
\icmlauthor{Adithya Vellal}{duke}
\icmlauthor{Saptarshi Chakraborty}{berk}
\icmlauthor{Jason Xu}{duke}
\end{icmlauthorlist}

\icmlaffiliation{duke}{Department of Statistical Science, Duke University, Durham, NC, USA.}
\icmlaffiliation{berk}{Department of Statistics, University of California, Berkeley, CA, USA}

\icmlcorrespondingauthor{Jason Xu}{jason.q.xu@duke.edu}

\icmlkeywords{Machine Learning, ICML}

\vskip 0.3in
]



\printAffiliationsAndNotice{\icmlEqualContribution} 

\begin{abstract}
Recent progress in center-based clustering algorithms combats poor local minima by implicit annealing, using a family of generalized means. These methods are variations of Lloyd's celebrated $k$-means algorithm, and are most appropriate for spherical clusters such as those arising from Gaussian data. In this paper, we bridge these algorithmic advances to classical work on hard clustering under Bregman divergences, which enjoy a bijection to exponential family distributions and are thus well-suited for clustering objects arising from a breadth of data generating mechanisms. The elegant properties of Bregman divergences allow us to maintain closed form updates in a simple and transparent algorithm, and moreover lead to new theoretical arguments for establishing finite sample bounds that relax the bounded support assumption made in the existing state of the art. Additionally, we consider thorough empirical analyses on simulated experiments and a case study on rainfall data, finding that the proposed method outperforms existing peer methods in a variety of non-Gaussian data settings.
\end{abstract}
 \section{Introduction and Background}
 
 
Clustering, the task of finding naturally occurring groups within a dataset, is a cornerstone of the unsupervised learning paradigm. 
Among a vast literature on clustering algorithms, center-based methods remain widely popular, and $k$-means \citep{macqueen1967some,lloyd1982least} remains the most prominent example 60 years after its introduction. Given $n$ data points $\mathcal{X}=\{\bX_i:i =1,\dots,n\}\subset \Real^p$,  $k$-means seeks to partition the data into $k$ groups in a way that minimizes the within-cluster variance. Representing the cluster centroids $\bTheta = \{\btheta_1,\dots,\btheta_k\} \subset \Real^p$ and  for some dissimilarity measure $d(\cdot,\cdot)$, $k$-means is formulated as the minimization of the objective function
\begin{equation}\label{obj1}
 f_{k\text{-means}}(\bTheta) = \sum_{i=1}^n \min_{1 \le j \le k} d(\bX_i , \btheta_j). 
\end{equation}  Taking the squared Euclidean distance $d(\bx,\by) = \|\bx-\by\|_2^2$ yields the classical $k$-means formulation, while Bregman hard clustering \citep{banerjee2005clustering} allows $d$ to be any Bregman divergence. 

 Unfortunately, $k$-means and its variants suffer from well-documented shortcomings such as  sensitivity to initial guess \cite{vassilvitskii2006k,bachem2017distributed,deshpande2020robust}, stopping at poor local minima \cite{zhang1999k,xu2019power}, and fragility to outliers \cite{paul2021uniform} that continue to be addressed in recent work.  
 In particular, a drawback we seek to address in this article is the implicit assumption behind $k$-means that the data can be clustered spherically, which works well in Gaussian settings but can fail to separate even simple data examples otherwise \cite{ng2002spectral}. To ameliorate this issue, researchers have proposed various dissimilarity measures \cite{banerjee2005clustering,de2012minkowski,chakraborty2017k,brecheteau2021robust} that admit non-elliptical contours. Among these, the choice of Bregman divergences is appealing \cite{telgarsky2012agglomerative,paul2021on} as their many nice mathematical properties are amenable to analysis and effective algorithms. Their connection to exponential families makes them ideal for many common data generating mechanisms.
 
 Like classic $k$-means, analogs such as Bregman hard clustering  are susceptible to  local optima due to non-convexity of the objective. Wrapper methods such as $k$-means ++ \cite{arthur2007k} and its variants alleviate the problem to an extent, though methods continue to struggle as dimension increases  \cite{aggarwal2001surprising}. Recently, \cite{xu2019power,chakraborty2020entropy} tackle this problem by gradually annealing the optimization landscape  in the Euclidean case. Theoretical work by \cite{paul2021uniform} proposes a clustering framework that encompasses Bregman divergences, establishing desirable properties such as robustness, but does not implement or empirically analyze the Bregman case. The authors advocate generic iterative optimization, using adaptive gradient descent  for the general case \cite{JMLR:v12:duchi11a}. 
 
 In this paper, we propose and analyze a scalable, transparent clustering algorithm that performs annealing to target the same objective as Bregman hard clustering. That is, it inherits nice properties and interpretability while being less prone to poor local solutions.  Leveraging the mean-as-minimizer property of Bregman divergences leads to a simple and elegant algorithm with closed form updates through majorization-minimization (MM). 
 We show that it outperforms alternatives on a range of exponential family data via thorough simulation studies.
 
 Moreover, we formulate the method so that it inherits a number of strong theoretical guarantees. Through a novel and extensive theoretical study, we bound the excess risk by appealing to the recent literature on sub-exponential concentration inequalities and the classic approach of bounding the Rademacher complexity through Dudley's entropy integral. These include generalization bounds and learning rates for a broad family of distributions, lifting a restrictive condition that assumes the data has bounded support in previous analyses \citep{paul2021uniform}. Our results not only match the best known rates in literature while imposing much weaker assumptions, but also reveal an interesting dependency of the excess risk on the true cluster centers $\tth$ and the second moment of the underlying data distribution.
 

We briefly overview some relevant concepts that will be used in formulating the Bregman power $k$-means method.

\paragraph{Bregman divergences}
A differentiable, convex function $\phi: \Real^p \to \Real$ generates the \textit{Bregman divergence} $d_{\phi}: \Real^p \times \Real^p \to \Real_{\ge 0}$ 
defined by  \begin{equation}\label{eq:Bregman} d_\phi(\bx,\by) = \phi(\bx) - \phi(\by) - \langle \nabla \phi(\by) , \bx - \by \rangle . \end{equation}
It becomes clear that $d_\phi(\bx, \by) \geq 0 \ \forall \ \bx, \by \in \Real^p$ since $\phi(\bx) \geq \phi(\by) + \langle \nabla \phi(\by) , \bx - \by \rangle$ is synonymous with $\phi$ being a convex function. From a geometric perspective, $d_\phi(\bx, \by)$ can be thought of as the distance between $\phi(\bx)$ and the first-order approximation of $\phi(\bx)$ centered at $\phi(\by)$. In more intuitive terms, this can be described as the distance between $\phi(\bx)$ and the value of the tangent line to $\phi(\by)$ evaluated at the point $\bx$. 
For instance, taking $\phi(\bu)=\|\bu\|_2^2$ generates the Euclidean distance. Without loss of generality, one may assume $\phi(\mathbf{0}) = \nabla \phi(\mathbf{0})=0.$ 

While not necessarily symmetric like the usual Euclidean distance, Bregman divergences satisfy numerous desirable properties which make them useful for quantifying dissimilarity. They are non-negative and maintain linearity; for any convex $f, g$, we have $d_{\alpha f + \beta g} = \alpha d_f + \beta d_g \forall \ \alpha, \beta \in \Real_{> 0}$. Of note is that Bregman divergences obey a mean-as-minimizer property. As shown in \citet{banerjee2005optimality}, this can be characterized in such a way as stated in a proposition we rephrase below:  \begin{proposition}\label{prop:banerjee} Let $d: \Real^p \times \Real^p \to \Real_{\geq 0}$ to be any continuous function with continuous first-order partial derivatives obeying $d(\bx, \bx) = 0$. Then the mean $\mathbb{E}[X]$ serves as the unique minimizer of $\mathbb{E}[d(X,\by)]$ for $\by \in \Real^p$ if and only if there exists some $\phi$ such that $d = d_{\phi}$.
\end{proposition}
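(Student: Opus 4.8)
The two implications are handled separately; the reverse one is immediate from the definition, while the forward one carries the real content. For the reverse direction, suppose $d = d_\phi$ for a differentiable strictly convex $\phi$. Writing $\bmu := \E[X]$ and using linearity of expectation, $\E[d_\phi(X,\by)] = \E[\phi(X)] - \phi(\by) - \langle \nabla\phi(\by), \bmu - \by\rangle$. Subtracting the same expression evaluated at $\by = \bmu$, every term containing $X$ cancels and the remainder collapses to $d_\phi(\bmu,\by)$, so that $\E[d_\phi(X,\by)] = \E[d_\phi(X,\bmu)] + d_\phi(\bmu,\by)$. Since $d_\phi(\bmu,\by) \ge 0$ with equality if and only if $\by = \bmu$ by strict convexity, $\bmu$ is the unique minimizer.

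For the forward direction, assume $\E[X]$ is the unique minimizer for every $X$; it suffices to test finitely supported distributions, for which differentiating the finite sum $\E[d(X,\by)]$ in $\by$ is elementary. The first-order condition at $\bmu = \E[X]$ gives $\E[\nabla_2 d(X,\bmu)] = 0$, where $\nabla_2$ denotes the gradient in the second argument. Fixing a target $\by$ and letting $X$ range over finite distributions with $\E[X] = \by$, this reads $\sum_i p_i\, \nabla_2 d(\bx_i, \by) = 0$ whenever $\sum_i p_i \bx_i = \by$. Set $G_\by(\bt) := \nabla_2 d(\by + \bt, \by)$, noting $G_\by(\mathbf{0}) = 0$ because $\bz \mapsto d(\by,\bz)$ attains its minimum at $\bz = \by$. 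A two-point distribution supported on $\by + \bt$ and $\by - s\bt$ forces $G_\by(-s\bt) = -s\,G_\by(\bt)$ for all $s > 0$, so $G_\by$ is odd and positively homogeneous, hence homogeneous of degree one; a balanced three-point distribution with atoms $\bt_1, \bt_2, -(\bt_1+\bt_2)$ then forces $G_\by(\bt_1 + \bt_2) = G_\by(\bt_1) + G_\by(\bt_2)$. Additivity plus continuity yields linearity, so $\nabla_2 d(\bx,\by) = A(\by)(\bx - \by)$ for a continuous matrix field $A$.

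It remains to exhibit a strictly convex $\phi$ with $\nabla^2\phi = -A$: once this is in hand, integrating $\nabla_2 d$ along the segment from $\bx$ to $\by$ and invoking $d(\bx,\bx) = 0$ reconstructs $d$ as $d_\phi$. Symmetry of $A$ should come from symmetry of the Hessian $\nabla_{\by\by}^2 d$ on the diagonal, the integrability (mixed-partial) conditions that make $-A$ a Hessian from equality of the third-order partials of $d$, and nonnegativity of $d$ near the diagonal forces $-A \succeq 0$, i.e.\ $\phi$ convex. The substantive idea is the distributional reduction collapsing $\nabla_2 d$ to an affine function of its first slot; the main obstacle is precisely this final passage from a linear gradient to a bona fide Bregman divergence, since certifying that $-A(\by)$ is the Hessian of a single convex potential (symmetry, integrability, and semidefiniteness together) is delicate under the bare $C^1$ hypothesis. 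I would resolve it either by extracting the extra smoothness --- the identity $\nabla_2 d(\bx,\by) = A(\by)(\bx-\by)$ already renders $\nabla_2 d$ affine, hence smooth, in $\bx$ --- or, failing that, by a mollification argument applied to the generating function.
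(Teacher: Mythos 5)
The paper does not actually prove this proposition: it is restated verbatim from Banerjee, Guo and Wang (2005), so there is no in-paper argument to compare yours against. Judged on its own terms, your reverse direction is correct and standard (the bias--variance-type identity $\E[d_\phi(X,\by)] = \E[d_\phi(X,\bmu)] + d_\phi(\bmu,\by)$ does the work, though note that uniqueness of the minimizer requires \emph{strict} convexity of $\phi$, which you correctly supply even though the paper's definition only asks for convexity). Your forward direction begins with the same device as the cited source: first-order conditions over finitely supported distributions with prescribed mean, collapsing $\nabla_2 d(\bx,\by)$ to $A(\by)(\bx-\by)$ via the two-point (homogeneity/oddness) and three-point (additivity) constructions. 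That reduction is sound.

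The genuine gap is the final passage from the linear gradient to a Bregman divergence, which you flag but do not close, and your proposed repairs do not work as stated. First, symmetry of $A(\by)$ does not come from symmetry of $\nabla^2_{\by\by}d$ on the diagonal: by your identity, $A_{ij}(\by) = \partial^2 d/\partial x_j \partial y_i$ is a \emph{mixed} second derivative, and $A_{ij}=A_{ji}$ is a nontrivial constraint that has to be extracted from the consistency requirement that $\by \mapsto A(\by)(\bx-\by)$ integrate back (via $d(\bx,\bx)=0$) to a function whose $\by$-gradient is again $A(\by)(\bx-\by)$ --- only the symmetric part of $A$ survives the line integral, so this needs an explicit argument. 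Second, the integrability conditions $\partial_k A_{ij} = \partial_j A_{ik}$ require differentiating $A$, which under the bare $C^1$ hypothesis on $d$ is merely continuous; your observation that $\nabla_2 d$ is affine, hence smooth, \emph{in $\bx$} addresses the wrong variable, since the regularity needed to exhibit $-A$ as a Hessian field is regularity in $\by$. What is available is that $A(\cdot)\bv = \nabla_\by\bigl[d(\bx_0+\bv,\cdot) - d(\bx_0,\cdot)\bigr]$ is a gradient field for every $\bv$, which is the correct starting point for defining the potential, but the deduction from there to a single convex $\phi$ with $d = d_\phi$ is exactly the substantive content of the converse in Banerjee et al., and "a mollification argument" is not a proof. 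As it stands, the forward implication is an (accurately self-diagnosed) proof sketch rather than a proof.
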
 
Furthermore, Bregman divergences share a one-to-one correspondence with regular exponential families, detailed in the next section, making them well-suited for learning from many common data types arising from exponential family distributions. Here the notion of  Bregman information $I_\phi(X) = \min_{\bs \in \textit{dom}(X)} \mathbb{E}[d_\phi(X, \bs)]$ provides a natural measure of distortion. This is minimized at $\bs = \mathbb{E}[X]$ (cf. Prop. \ref{prop:banerjee}), and $I_\phi(X)$ can thus be interpreted as a generalization of variance when spread around the mean of $X$ is measured under $d_\phi$.

 %



\paragraph{Majorization-minimization}
The principle of MM has become increasingly popular in optimization and statistical learning \citep{mairal2015incremental,lange2016mm}.
Rather than minimizing an objective of interest $f$ directly, an MM algorithm successively minimizes a sequence of simpler \textit{surrogate functions} $g(\btheta \mid \btheta_n)$ that  \textit{majorize} the original objective  $f(\btheta)$ at the current iterate
$\btheta_m$. Majorization is defined by two conditions: tangency  $g(\btheta_m \mid \btheta_m) =  f(\btheta_m)$ at the current iterate, and domination $g(\btheta \mid \btheta_m)  \geq f(\btheta)$ for all $\btheta$. The steps of the MM algorithm are defined by the rule
\vspace{-0.1cm} 
\begin{equation}\label{eq:MMiter}
\btheta_{m+1} := \arg\min_{\btheta}\; g(\btheta \mid \btheta_m), \end{equation}\vspace{-0.2cm}
which immediately implies the descent property
\begin{eqnarray*}
f(\btheta_{m+1}) \, \leq \, g(\btheta_{m+1} \mid \btheta_{m}) 
\, \le \,  g(\btheta_{m} \mid \btheta_{m}) 
\, = \, f(\btheta_{m}). \label{eq:descent}
\end{eqnarray*}
That is, a decrease in $g$ results in a decrease in $f$.
Note that $g(\btheta_{m+1} \mid \btheta_{m} ) \le g(\btheta_{m} \mid \btheta_{m})$ does not require $\btheta_{m+1}$ to minimize $g$ exactly, so that any descent step in $g$ suffices. 
The MM principle offers a general prescription for transferring a difficult optimization task onto a sequence of simpler problems \citep{LanHunYan2000}, and includes the well-known EM algorithm for maximum likelihood estimation under missing data as a special case \citep{BecYanLan1997}. 
\vspace{-0.1cm}
\paragraph{Power means}
Power means are a class of generalized means  defined $M_s(\by)=\left(\frac{1}{k}\sum_{i=1}^k y_i^s \right)^{1/s}$ for a vector $\by$. We see that $s>1$ corresponds to the usual $\ell_s$-norm of $\by$, $s=1$ to the arithmetic mean, and $s=-1$ to the harmonic mean. Power means possess a number of nice properties: they are homogeneous, monotonic, and differentiable with 
\vspace{-0.1cm}
\begin{eqnarray}\label{eq:firstpartial}
 \frac{\partial}{\partial y_j}   M_ s( \by) & =&  \Big(\frac{1}{k}\sum_{i=1}^k y_i^s\Big)^{\frac{1}{s}-1} \frac{1}{k}y_j^{s-1} ,\label{power_mean_grad}
\end{eqnarray}
\vspace{-0.1cm}
and importantly they satisfy the limits
\begin{subequations}\label{eq:limit}
\begin{equation}
    \lim_{s \to -\infty}M_s(\by)=\min\{y_1,\ldots,y_k\}
\end{equation} \vspace{-10pt}
\begin{equation}
    \lim_{s \to \infty}M_s(\by)=\max\{y_1,\ldots,y_k\} .
\end{equation}
\end{subequations}
Further, the well-known power mean inequality holds: for any $s \le t$, 
$M_s (\by) \le M_ t (\by)$ 
\citep{steele2004cauchy}.

\citet{xu2019power} utilize these means toward clustering, proposing the power $k$-means objective function  defined
\vspace{-0.01cm}
\begin{equation}\label{eq:limit2}
f_s(\Theta )=\sum_{i=1}^n M_s(\|\bx_i-\btheta_1\|^2,\ldots,\|\bx_i-\btheta_k\|^2) \vspace{-0.1cm}
\end{equation}
for a given power $s$. The algorithm then seeks to minimize $f_s$ iteratively while sending $s \rightarrow -\infty$. Doing so approaches the original $f(\Theta)$ in \eqref{obj1} due to \eqref{eq:limit}, coinciding with the original $k$-means objective and retaining its interpretation as minimizing within-cluster variance. The $k$-harmonic means method \citep{zhang1999k}, an early attempt to reduce the sensitivity to initialization of $k$-means by replacing the $\min$ appearing in \eqref{obj1} by the harmonic average, 
can be seen as the special case of \eqref{eq:limit2} with $s=-1$.
Power $k$-means clustering extends this idea to work in higher dimensions when the harmonic mean is longer a good proxy for \eqref{obj1}, instead using a sequence of \textit{power means} as a family of successively smoother optimization landscapes. 
The intermediate surfaces   exhibit fewer poor local optima than \eqref{obj1}, and each  step is carried out via MM.

\section{Bregman Power $k$-Means}
\label{problem statement}
We consider a power $k$-means objective function under a given Bregman divergence $d_\phi$ and power $s$:
\vspace{-0.1cm}
\begin{eqnarray}\label{obj}
    f_s(\bTheta) = \sum_{i=1}^n M_s(d_\phi(\bx_i, \btheta_1), \hdots, d_\phi(\bx_i, \btheta_k))
\end{eqnarray}

We see that this is a generalization of \eqref{eq:limit2}, as power $k$-means is recovered by taking $\phi$ to be the squared norm. On the other hand, \citet{paul2021uniform} propose to use Adagrad to minimize a more general objective
in that $M_s: \Real^k_{\ge 0} \to \Real_{\ge 0}$ is \textit{any} component-wise non-decreasing function, such as a generalized mean. Though the general theoretical treatment in \citet{paul2021uniform} does encompass the case where dissimiliarities $d()$ are given by Bregman divergences, the authors do not consider or implement this case explicitly. As a result, there a generic incremental optimization (such as Adagrad) is suggested, which produces a less scalable algorithm.  
In contrast, the geometry of Bregman divergences allows us to derive an elegant MM algorithm with closed form updates, matching the complexity of standard power $k$-means and Lloyd's algorithm. These properties will also lead to stronger theoretical results, detailed in the following section.

First, convexity of $\phi$ together with properties of power means ensures that our objective can be \textit{majorized} by its tangent plane. That is, upon differentiating \eqref{eq:firstpartial}, one can see that the Hessian matrix of $M_s()$ is concave whenever $s \leq 1$ \citep{xu2019power}.  This yields an upper bound which will supply a useful surrogate function:
\vspace{-0.1cm}
\begin{eqnarray}\label{eq:surrogate}
    \begin{split}
        f_s(\bTheta) \leq f_s(\bTheta_m) - \sum_{i=1}^n \sum_{j=1}^k w_{m,ij} \cdot d_\phi(\bx_i, \btheta_{m,j}) + \\
        \sum_{i=1}^n \sum_{j=1}^k w_{m,ij} \cdot d_\phi(\bx_i, \btheta_{j}),
    \end{split}
\end{eqnarray}
where the scalars from partial differentiation abbreviated \begin{eqnarray}\label{eq:w}
    w_{m,ij} = \frac{\frac1k d_\phi(\bx_i, \btheta_{m,j})^{s-1}}{(\frac1k \sum_{l=1}^k (d_\phi(\bx_l, \btheta_{m,j})^s)^{1 - \frac1s}}
\end{eqnarray} act as weights between $\bx_i$ and $\btheta_j$ at the $m^{th}$ iteration.

Next, the mean-as-minimizer property from Prop. \eqref{prop:banerjee} suggests we may expect a closed form solution to the stationarity equations, which we derive here for completenesss.
Analogous to the iteration between updating cluster label assignments and then re-defining cluster means in Lloyd's algorithm for standard $k$-means, we update cluster centers by minimizing the right hand side of equation \eqref{eq:surrogate} given weights $w_{m,ij}$ with respect to $\boldsymbol\theta$: for each $j$, 
\begin{eqnarray}
    \begin{split}
        \nabla_{\btheta_j} \big[f_s(\bTheta_m) - \sum_{i=1}^n \sum_{j=1}^k w_{m,ij}  d_\phi(\bx_i, \btheta_{m,j}) + \\ 
        \sum_{i=1}^n \sum_{j=1}^k w_{m,ij}  d_\phi(\bx_i, \btheta_{j})\big] = 0
    \end{split} \\
    \sum_{i=1}^n w_{m,ij} \nabla^2_{\btheta_j} \phi (\btheta_j) \cdot [\btheta_j - \bx_i] = 0 \\
    \btheta_{m+1, j} = \frac{\sum_{i=1}^n w_{m,ij} \bx_i}{\sum_{i=1}^n w_{m,ij}}. \label{eq:theta}
\end{eqnarray}

\begin{algorithm}[h]
1. Initialize $s_0 < 0$ and $\mathbf{\Theta_0}$, input data $\bx \in \mathbb{R}^{p \times n}$,  constant $\eta > 1$, iteration $m=1$ \\
2. \textbf{repeat} \\ 
3. $w_{m,ij} \leftarrow (\frac1k \sum_{i=1}^k d_\phi(\bx_i, \btheta_{m,j})^{s_m})^{\frac{1}{s_m} - 1}   d_\phi(\bx_i, \btheta_{m,j})^{s_m-1}$ \\
4. $\btheta_{m+1, j} = (\sum_{i=1}^n w_{m,ij})^{-1} \sum_{i=1}^n w_{m,ij} \bx_i$ \\
5. $s_{m+1} \leftarrow \eta \cdot s_m$ (optional) \\
6. \textbf{until} convergence
 \caption{Bregman Power $k$-means Pseudocode}
 \label{bpkm}
\end{algorithm}

Equations \eqref{eq:w} and \eqref{eq:theta} imply a transparent, easy-to-implement method that implicitly performs annealing through a family of optimization landscapes indexed by $s$. The resulting iteration can be summarized concisely in Algorithm 1. By contrast, a gradient-based update for $\btheta_j$ with step size $\alpha$ such as suggested in \citep{paul2021uniform} would entail
\begin{align*}
    \btheta_{m+1,j} = \btheta_{m,j} - \alpha\sum_{i=1}^n w_{m,ij} \nabla^2_{\btheta_{m,j}} \phi (\btheta_{m,j}) \cdot [\btheta_{m,j} - \bx_i],
\end{align*}
both incurring higher cost at each iteration and making significantly slower progress per step. Depending on the choice of method, computing and properly tuning $\alpha$ must be done on a case-by-case basis and adds additional overhead.



\begin{table}[!t]
    \caption{Examples of exponential family distributons and their corresponding Bregman divergences}\label{tab:some}
	\centering
	\begin{tabular}{|p{1.8cm}|p{2.25cm}|p{2.9cm}| }
         \hline
           Distribution & $\phi(\bx)$ & $d_\phi(\bx,\by)$ \\
         \hline
          Gaussian & $\|\bx\|^2$ &  $\|\bx-\by\|^2$\\
           Multinomial & $\sum_{i=1}^m x_i \log x_i$ & $\sum_{i=1}^m x_i \log \frac{x_i}{y_i}$\\
           Gamma & $-\alpha + \alpha \log \frac{\alpha}{x}$ & $\frac{\alpha}{y} (y \log \frac{y}{x}) + x - y$\\
           Poisson & $x \log x - x$ & $x \log \frac{x}{y} - (x-y)$\\
         \hline
    \end{tabular}
\end{table}

\paragraph{Exponential family data} 
A statistical motivation for our generalization comes from the connection between Bregman divergences and exponential families. 
Recall exponential family distributions with parameter $\theta$ and scale parameter $\tau$ take the canonical form
$$p( y | \theta, \tau ) = C_1(y,\tau) \exp \left\{ \frac{ y \theta - \phi^\ast( \theta) }{C_2(\tau)} \right\}. $$ 
The convex conjugate of its \textit{cumulant function} $\phi^\ast$, which we denote $\phi$, uniquely generates the Bregman divergence $d_\phi$ that represents the exponential family likelihood up to proportionality. With $g$ denoting the canonical link function, 
the negative log-likelihood of $y$ can be written as its Bregman divergence to the mean:
\[  - \ln p( y | \theta, \tau) = d_\phi \left( y , g^{-1} (\theta ) \right) + C(y, \tau). \]
As an example, the cumulant function in the Poisson likelihood is $\phi^\ast(x) = e^x$, whose conjugate $\phi(x)=x \ln x - x$ produces the relative entropy $d_\phi(p,q) = p \ln (p/q) - p + q.$  Similarly, recall that the Bernoulli likelihood has cumulant function $\phi^\ast(x) = \ln (1 + \exp(x))$. Its conjugate is given by $\phi(x) = x \ln  x + (1-x) \ln (1-x)$, and generates $d_\phi(p,q) = p \ln\frac{p}{q} + (1-p) \ln\frac{1-p}{1-q}.$ 
These relationships for some common distributions are summarized in Table \ref{tab:some} and show, for instance, that maximizing the likelihood of a generalized linear model is equivalent to minimizing a Bregman divergence between the responses and regression parameters. In the context of clustering, they allow us to understand the analog of $k$-means minimizing the within-cluster variance. Indeed, the Bregman hard clustering problem is equivalent to finding a partitioning of the data such that the loss in Bregman information $I_\phi(X)$ due to quantization is minimized---or equivalently, such that the within-cluster Bregman information is minimized. See Theorem 1 of \citet{banerjee2005clustering} for details and a formal statement of this result.
Because we target the same objective \eqref{obj1} as $s \rightarrow -\infty$, our formulation inherits this property immediately in the target limit. In fact, it is not difficult to show that this convergence is \textit{uniform}:
\begin{theorem}\label{o2}
For any sequence, $s_m \downarrow -\infty$ and $s_1 \le 1$, $f_{s_m}(\cdot)$ converges uniformly on $\C$ to the Bregman hard clustering objective \eqref{obj1}. 
\end{theorem}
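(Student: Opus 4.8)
The plan is to reduce the claim to a single $\by$-independent two-sided estimate for the power mean and then sum it over the $n$ fixed data points. The governing observation is that for a negative exponent the power mean is squeezed between the minimum of its arguments and a constant multiple of that minimum, where the constant tends to $1$ as $s\to-\infty$. Concretely, I would first isolate the lemma: for any $\by\in\Real_{\ge 0}^k$ and any $s<0$,
\[
  \min_{1\le j\le k} y_j \;\le\; M_s(\by)\;\le\; k^{-1/s}\,\min_{1\le j\le k} y_j .
\]
This multiplicative envelope is the crux of the whole argument, since it decouples the dependence on the centroids from the dependence on $s$.

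To establish the lemma, write $y_{(1)}=\min_j y_j$ and use that $t\mapsto t^s$ is strictly decreasing on $(0,\infty)$ when $s<0$. Then each $y_j^s\le y_{(1)}^s$, while the minimizing index contributes exactly $y_{(1)}^s$ and the remaining terms are nonnegative, so that $\tfrac1k y_{(1)}^s\le \tfrac1k\sum_{j} y_j^s\le y_{(1)}^s$. Raising through the exponent $1/s<0$ reverses these inequalities and yields the two displayed bounds after simplifying $(\tfrac1k y_{(1)}^s)^{1/s}=k^{-1/s}y_{(1)}$ and $(y_{(1)}^s)^{1/s}=y_{(1)}$. The degenerate case $y_{(1)}=0$ is handled by the convention $0^s=+\infty$ for $s<0$, under which $M_s(\by)=0=y_{(1)}$ and the bound persists.

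Next I would apply the lemma pointwise to each summand $M_{s_m}\big(d_\phi(\bx_i,\btheta_1),\dots,d_\phi(\bx_i,\btheta_k)\big)$, obtaining
\[
  0\;\le\; M_{s_m}(\cdots)-\min_{j} d_\phi(\bx_i,\btheta_j)\;\le\;\big(k^{-1/s_m}-1\big)\min_{j} d_\phi(\bx_i,\btheta_j).
\]
Summing over $i=1,\dots,n$ collapses the middle terms to $f_{s_m}(\bTheta)-f_{k\text{-means}}(\bTheta)$, giving $0\le f_{s_m}(\bTheta)-f_{k\text{-means}}(\bTheta)\le (k^{-1/s_m}-1)\,f_{k\text{-means}}(\bTheta)$ for every $\bTheta$, where $f_{k\text{-means}}(\bTheta)=\sum_{i=1}^n\min_j d_\phi(\bx_i,\btheta_j)$ is the objective \eqref{obj1}. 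Since $\C$ is compact and $d_\phi$ is continuous, $f_{k\text{-means}}$ is continuous and attains a finite maximum $B=\sup_{\bTheta\in\C} f_{k\text{-means}}(\bTheta)$; taking the supremum over $\C$ then yields $\sup_{\bTheta\in\C}\lvert f_{s_m}(\bTheta)-f_{k\text{-means}}(\bTheta)\rvert\le (k^{-1/s_m}-1)\,B$. Because $s_m\downarrow-\infty$ forces $-1/s_m\to 0^+$ and hence $k^{-1/s_m}\to 1$, the right-hand side vanishes independently of $\bTheta$, which is precisely uniform convergence (and incidentally supplies an explicit rate).

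The only place where uniformity could fail is in the factorization of the upper bound into a vanishing prefactor $k^{-1/s_m}-1$ times a quantity controlled uniformly over $\C$, so the multiplicative form of the lemma is what does the real work; I expect the remaining obstacles to be purely bookkeeping around the boundedness of $\C$ and the zero-divergence edge case. As an alternative slicker route, one may invoke Dini's theorem: the power-mean inequality makes $f_{s_m}$ decrease monotonically in $m$ toward the continuous limit $f_{k\text{-means}}$ on the compact set $\C$, whence convergence is automatically uniform; the explicit-bound approach is preferable here only because it also quantifies the rate. Finally, the hypothesis $s_1\le 1$ is not needed for this theorem beyond guaranteeing that the entire tail of the sequence is negative, which is automatic once $s_m\to-\infty$.
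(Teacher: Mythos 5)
Your argument is correct, but it takes a genuinely different route from the paper. The paper's proof is a two-line qualitative argument: by the power mean inequality, $f_{s_m}(\bTheta)$ decreases monotonically in $m$ to the continuous limit $f_{k\text{-means}}(\bTheta)$ pointwise on the compact set $\C$, so Dini's theorem upgrades this to uniform convergence --- exactly the ``alternative slicker route'' you mention in passing at the end. Your main argument instead proves the quantitative sandwich
\[
\min_{1\le j\le k} y_j \;\le\; M_s(\by) \;\le\; k^{-1/s}\min_{1\le j\le k} y_j \qquad (s<0),
\]
sums it over the data, and bounds $\sup_{\bTheta\in\C} f_{k\text{-means}}(\bTheta)$ by compactness of $\C$ and continuity of $d_\phi$, yielding the explicit uniform error bound $(k^{-1/s_m}-1)\,B \to 0$. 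The trade-off is clear: the paper's Dini argument is shorter and needs only monotonicity plus continuity of the limit, but is non-constructive; your multiplicative envelope additionally supplies a convergence rate in $s_m$, which could be useful for choosing an annealing schedule. Both arguments lean on compactness of $\C$ in an essential way (Dini fails on non-compact domains; your bound $B=\sup_{\bTheta\in\C}f_{k\text{-means}}(\bTheta)$ would be infinite otherwise). Your closing remark about $s_1\le 1$ is fair for your route, since only the negative tail of the sequence matters for the limit; for the paper's route the monotonicity $f_{s_{m+1}}\le f_{s_m}$ holds for any decreasing sequence regardless of sign, so the hypothesis is likewise not load-bearing here. One minor point worth making explicit if you write this up: in the degenerate case $\min_j d_\phi(\bx_i,\btheta_j)=0$ you should verify the convention $M_s(\by)=0$ directly rather than manipulating $0^s$, but as you note the inequality persists either way.
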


Another desirable property of Algorithm \ref{bpkm} is that all iterates lie within the convex hull of the data, which suggests performance stability in addition to standard convegence and descent guarantees as a valid MM algorithm \cite{lange2016mm}. Let $\mathscr{C}$ denote the closed convex hull of the data; the following result is inherited directly from  power $k$-means  \citep{xu2019power} 
\begin{theorem}\label{o1}
Let $\bTheta_{n,s}$ be the (global) minimizer of $f_s(\cdot)$. Then $\bTheta_{n,s} \subset \mathscr{C}$ for all $s \le 1$.
\end{theorem}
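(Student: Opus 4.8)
The plan is to argue by contradiction: I will show that any center placed outside $\C$ can be strictly improved upon by replacing it with its Bregman projection onto $\C$, so no global minimizer can have a center outside $\C$. Two ingredients drive the argument. The first is that the power mean is strictly increasing in each coordinate on $\Real_{>0}^k$: differentiating as in \eqref{eq:firstpartial} gives $\partial M_s(\by)/\partial y_j = (\frac1k\sum_{i} y_i^s)^{1/s-1}\,\frac1k y_j^{s-1}>0$ for every $s\le 1$ (indeed every $s\neq 0$) and $y_j>0$. Hence $f_s$ is a strictly increasing function of each dissimilarity $d_\phi(\bx_i,\btheta_j)$, and it suffices to produce, for an offending center, a replacement inside $\C$ that weakly decreases $d_\phi(\bx_i,\cdot)$ for every datum and strictly decreases it for at least one.

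Suppose then that the global minimizer $\bTheta_{n,s}$ has a center $\btheta_j\notin\C$. Since $\C$ is compact and convex and $\bz\mapsto d_\phi(\bz,\btheta_j)=\phi(\bz)-\phi(\btheta_j)-\langle\nabla\phi(\btheta_j),\bz-\btheta_j\rangle$ is convex in $\bz$, the Bregman projection $\tilde\btheta_j:=\argmin_{\bz\in\C} d_\phi(\bz,\btheta_j)$ exists and lies in $\C$. Its first-order optimality condition is $\langle\nabla\phi(\tilde\btheta_j)-\nabla\phi(\btheta_j),\,\bx-\tilde\btheta_j\rangle\ge 0$ for all $\bx\in\C$. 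Combining this with the three-point identity $d_\phi(\bx,\tilde\btheta_j)+d_\phi(\tilde\btheta_j,\btheta_j)-d_\phi(\bx,\btheta_j)=\langle\nabla\phi(\btheta_j)-\nabla\phi(\tilde\btheta_j),\,\bx-\tilde\btheta_j\rangle$ yields the generalized Pythagorean inequality $d_\phi(\bx_i,\btheta_j)\ge d_\phi(\bx_i,\tilde\btheta_j)+d_\phi(\tilde\btheta_j,\btheta_j)$ for every data point $\bx_i\in\C$. Because $\btheta_j\notin\C$ forces $\tilde\btheta_j\neq\btheta_j$, strict convexity of $\phi$ gives $d_\phi(\tilde\btheta_j,\btheta_j)>0$, so in fact $d_\phi(\bx_i,\btheta_j)>d_\phi(\bx_i,\tilde\btheta_j)$ for all $i$.

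Replacing $\btheta_j$ by $\tilde\btheta_j$ and leaving the remaining centers fixed therefore strictly decreases the $j$-th argument of each term $M_s(d_\phi(\bx_i,\btheta_1),\dots,d_\phi(\bx_i,\btheta_k))$; by the strict coordinatewise monotonicity from the first step, $f_s$ strictly decreases, contradicting the global optimality of $\bTheta_{n,s}$. Hence every center lies in $\C$, i.e. $\bTheta_{n,s}\subset\C$.

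I expect the main obstacle to be the asymmetry of $d_\phi$: since the center enters $d_\phi(\bx_i,\btheta_j)$ as the \emph{second} argument, one must project in the \emph{first} argument (minimizing $d_\phi(\bz,\btheta_j)$ over $\bz$) for the three-point identity to produce an inequality pointing the right way; projecting in the wrong slot would not give a guaranteed decrease. A secondary point I would treat briefly is the boundary behavior of $M_s$ for $s<0$ when a projected center happens to coincide with a datum so that some $d_\phi(\bx_i,\tilde\btheta_j)=0$; such a coordinate only drives the corresponding $M_s$ term further down, so the strict-decrease conclusion is unaffected.
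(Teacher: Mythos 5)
Your proof is correct and follows essentially the same route as the paper's: Bregman-project each offending center onto $\C$ in the \emph{first} argument, invoke the generalized Pythagorean inequality $d_\phi(\bx,\btheta_j)\ge d_\phi(\bx,P_\C(\btheta_j))+d_\phi(P_\C(\btheta_j),\btheta_j)$, and conclude via coordinatewise monotonicity of $M_s$. Your only addition is the strictness step (via strict convexity of $\phi$), which the paper omits but which is actually what upgrades ``some minimizer lies in $\C$'' to ``the global minimizer lies in $\C$.''
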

These proofs are fairly straightforward and are given in full detail in the Appendix.

\section{Theoretical Analysis}
\label{stat theory}
In addition to casting the problem in such a way that it inherits classical guarantees, we now contribute new theoretical devices toward understanding its generalization error. 
The complete proofs pertaining to this section are available in the Appendix. We consider data  $\{\bX_i\}_{i \in [n]}$ independent and identically distributed according to some distribution $P$, and further assume that $P$ has a sub-exponential $\ell_2$ norm. This condition on $P$ is strictly weaker than imposing that  $P$ has bounded support, as required in recent analyses in the literature \cite{paul2021uniform}: formally,

\begin{assumption}
\label{ass1}
$\{\bX_i\}_{i \in [n]} \overset{\text{i.i.d.}}{\sim} P$, with 
\begin{itemize}
    \item $\sigma = \|\|\bX\|_2\|_{\psi_1} \triangleq \sup_{p \in \mathbb{N}}\frac{(\E \|\bX\|_2^p)^{1/p}}{p} < \infty.$
    \item $\sigma_\phi = \|\phi(\bX)\|_{\psi_1} \triangleq \sup_{p \in \mathbb{N}}\frac{(\E |\phi(\bX)|^p)^{1/p}}{p} < \infty$.
\end{itemize}
\end{assumption}
Note that A~\ref{ass1} is satisfied by many popularly used distributional models not limited to Gaussian mixtures, and always holds whenever $P$ has bounded support. We also make the following standard assumption on regularity of the corresponding Bregman divergence. 
\begin{assumption}
\label{ass2}
$\nabla \phi$ is $\tau_2$-Lipschitz. Moreover, $\phi$ is $\tau_1$-strongly convex, i.e. $\forall \, \bx,\by \in \Real^p$ and for $0 \le \alpha \le 1$,
{\small
\[\phi(\alpha \bx +(1-\alpha) \by) \le \alpha \phi(\bx) + (1-\alpha) \phi(\by) - \frac{\tau_1}{2}\alpha (1-\alpha) \|\bx - \by\|_2^2.\]
}%
\end{assumption}
Recall strong convexity of $\phi$ relates to the smoothness of its conjugate, i.e. the cumulant function $\phi^\ast$ of exponential families \citep{kakade2010learning,zhou2018fenchel}. 
Note that under A~\ref{ass2}, both $\sigma$ and $\sigma_\phi$ are finite when $\|\bX\|_2$ is sub-Gaussian. Thus, we are also able to generalize the assumptions used in analyses of approaches such as convex clustering \cite{tan2015statistical}, as detailed in Appendix \ref{subg}.  
Now, let $\tilde{f}_{\bTheta}(\bx) = M_s\left(d_\phi(\bx,\btheta_1), \dots, d_\phi(\bx,\btheta_k)\right)$ and $P_n$ be the empirical distribution based on the data $\{\bX_i\}_{i \in [n]}$. That is, $P_n(A) = \frac{1}{n}\sum_{i=1}^n \one\{\bX_i \in A\}$ for any Borel set $A$. For simplicity, we denote $\mu g = \int g d\mu$ for any measurable function $g$ and measure $\mu$. Fixing these conventions, note the objective \eqref{obj}, upon scaling by $1/n$, can be written as $P_n \tilde{f}_{\bTheta}$. By the strong law, we know that for any $\bTheta \in \Real^{k \times p}$, $P_n \tilde{f}_{\bTheta} \xrightarrow{a.s.} P \tilde{f}_{\bTheta}$. Thus, as $n$ becomes large, our intuition tells us to expect that as the functions of $\bTheta$ $P_n \tilde{f}_{\bTheta}$ and $P \tilde{f}_{\bTheta}$, become close to each other, so do their respective minimizers $\hth$ and $\tth$. To make precise the notion of convergence of $\hth$ towards $\tth$, we denote the \textit{excess risk} at any set of cluster centroids $\bTheta$ as
\[\mathfrak{R}(\bTheta) = P \tilde{f}_{\bTheta} - P \tilde{f}_{\tth}.\] 
The goal of this section is to formally assert that $\mathfrak{R}(\hth)$ becomes very small with a high probability as one has access to more and more data. 

As a first step, we prove a high probability result on $\hth$, showing that $\hth$ remains bounded with a high probability as $n$ becomes large. To this end, we require a notion of distance between sets of cluster centroids, and following the literature \citep{chakraborty2021uniform} use the measure
\[\text{dist}(\bTheta_1,\bTheta_2) \triangleq \min_{O \in \mathscr{P}_k} \|\bTheta_1 - O \bTheta_2\|_F,\]
where $\mathscr{P}_k$ denotes the set of all $k \times k$ real permutation matrices. In particular, this accounts for the label switching problem and is agnostic to relabeling classes. Likewise, we require a standard identifiability condition \cite{pollard1981strong,paul2021uniform}:
\begin{assumption}\label{ass3}
Let 
\(M_\epsilon = \inf\{M>0 : \{\bTheta \in \Real^{k \times p}:\text{dist}(\bTheta,\tth)>M\} \subseteq  \{\bTheta \in \Real^{k \times p}: \mathfrak{R}(\bTheta) > \epsilon\}\}\). Then for any $\epsilon>0$, we have  $M_\epsilon < \infty$. 
\end{assumption}
This states that when the distance from $\bTheta$ to $\tth$ is large, then the excess risk at $\bTheta$ is also large. The following theorem formally states a high probability bound on $\hth$.
\begin{theorem}\label{t1}
Under assumption \ref{ass1}-\ref{ass3}, $\hth \subset B(\xi_P + \|\tth\|_F)$, with probability at least $1- e^{-cn}$. Here $c$ is an absolute constant and $\xi_P = M_{P \phi + \sigma_\phi}$.
\end{theorem}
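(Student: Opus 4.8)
The plan is to reduce the boundedness claim to a single high-probability bound on the population excess risk $\mathfrak{R}(\hth)$, and then let Assumption~\ref{ass3} together with the triangle inequality do the rest. Concretely, suppose we can show that, with probability at least $1-e^{-cn}$,
\[
\mathfrak{R}(\hth) \;\le\; P\phi + \sigma_\phi .
\]
Taking $\epsilon = P\phi+\sigma_\phi$ in Assumption~\ref{ass3}, the contrapositive of its defining property gives $\text{dist}(\hth,\tth) \le M_{P\phi+\sigma_\phi} = \xi_P$. Since $\text{dist}(\bTheta,\mathbf{0}) = \|\bTheta\|_F$ and $\text{dist}$ inherits the triangle inequality from the Frobenius norm (permutation matrices form a group under multiplication and preserve $\|\cdot\|_F$), we obtain $\|\hth\|_F \le \text{dist}(\hth,\tth) + \|\tth\|_F \le \xi_P + \|\tth\|_F$, which is exactly $\hth \subset B(\xi_P+\|\tth\|_F)$. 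Everything therefore hinges on the displayed excess-risk bound.

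To control $\mathfrak{R}(\hth)$ I would first exploit a convenient reference configuration. Placing all centers at the origin, $\bTheta_0 = \{\mathbf{0},\dots,\mathbf{0}\}$, the normalization $\phi(\mathbf{0}) = \nabla\phi(\mathbf{0}) = 0$ gives $d_\phi(\bx,\mathbf{0}) = \phi(\bx)$, so $\tilde{f}_{\bTheta_0}(\bx) = M_s(\phi(\bx),\dots,\phi(\bx)) = \phi(\bx)$. Since $\hth$ minimizes $P_n\tilde{f}_{\bTheta}$, this yields the deterministic inequality $P_n\tilde{f}_{\hth} \le P_n\tilde{f}_{\bTheta_0} = P_n\phi$; note it is precisely this reference that makes $P\phi$ appear in $\xi_P$. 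Next, Assumption~\ref{ass1} states that $\phi(\bX)$ is sub-exponential with $\|\phi(\bX)\|_{\psi_1} = \sigma_\phi$, so Bernstein's inequality for sub-exponential variables gives $P_n\phi \le P\phi + \sigma_\phi$ with probability at least $1-e^{-cn}$. On this event $P_n\tilde{f}_{\hth} \le P\phi + \sigma_\phi$.

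It remains to pass from the empirical objective $P_n\tilde{f}_{\hth}$ to the population objective at the data-dependent minimizer, since (using $\tilde{f}\ge 0$, hence $P\tilde{f}_{\tth}\ge 0$) we have $\mathfrak{R}(\hth) = P\tilde{f}_{\hth} - P\tilde{f}_{\tth} \le P\tilde{f}_{\hth} \le P_n\tilde{f}_{\hth} + (P-P_n)\tilde{f}_{\hth}$. This bridging term, evaluated at the random argument $\hth$, is the main obstacle: naive pointwise concentration fails because $\hth$ depends on the sample, and the class $\{\tilde{f}_{\bTheta}\}$ is not uniformly bounded over all of $\Real^{k\times p}$. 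My plan here is to invoke Theorem~\ref{o1} to confine $\hth$ to the convex hull $\C$ of the data, and then bound $\sup_{\bTheta \in \C}(P-P_n)\tilde{f}_{\bTheta}$ by a covering argument: the $\tau_2$-smoothness of $\phi$ and the differentiability of $M_s$ (Assumption~\ref{ass2}) make $\tilde{f}_{\bTheta}$ Lipschitz in $\bTheta$, while Assumption~\ref{ass1} controls the sub-exponential tails of the envelope. Because the hull radius grows only polylogarithmically in $n$ under sub-exponential tails, this uniform deviation is $o(1)$ with probability $1-e^{-cn}$ and is absorbed into the constant, delivering $\mathfrak{R}(\hth) \le P\phi+\sigma_\phi$. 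I expect the careful handling of this empirical-process term---controlling the supremum over the random, in-principle-unbounded parameter region while retaining the exponential probability---to be the crux of the argument; the reduction and the fixed-point concentration are routine by comparison.
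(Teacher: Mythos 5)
Your opening reduction (reference configuration at the origin giving $P_n\tilde{f}_{\hth}\le P_n\phi$, Bernstein's inequality with $t=\sigma_\phi$, then Assumption~\ref{ass3} plus the triangle inequality for $\mathrm{dist}$) matches the paper's proof exactly. Where you diverge is the bridging step from the empirical bound $P_n\tilde{f}_{\hth}\le P\phi+\sigma_\phi$ to the population bound $P\tilde{f}_{\hth}\le P\phi+\sigma_\phi$. The paper does this in one line with no empirical-process machinery at all: it defines $A_n=\{\bTheta: P_n\tilde{f}_{\bTheta}\le P\phi+\sigma_\phi\}$ and argues $A_n\subseteq\{\bTheta: P\tilde{f}_{\bTheta}\le P\phi+\sigma_\phi\}$ via $P\tilde{f}_{\bTheta}=\E[P_n\tilde{f}_{\bTheta}]$. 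You instead propose to control $\sup_{\bTheta\subset\C}(P-P_n)\tilde{f}_{\bTheta}$ by a covering argument over the convex hull of the data.

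Your route has a genuine quantitative gap. First, at the confidence level $1-e^{-cn}$ that the theorem demands, the radius of $\C$ is not polylogarithmic: for sub-exponential $\|\bX\|_2$ one has $\Pr(\max_i\|\bX_i\|_2>t)\lesssim n\,e^{-t/\sigma}$, so driving the failure probability down to $e^{-cn}$ forces a hull radius of order $n$, which inflates the Lipschitz constants and the envelope in your covering bound and destroys the claimed $o(1)$ uniform deviation. Second, even if the deviation were merely a constant $C'$, you would only conclude $\mathfrak{R}(\hth)\le P\phi+\sigma_\phi+C'$ and hence $\mathrm{dist}(\hth,\tth)\le M_{P\phi+\sigma_\phi+C'}$, a ball of a different radius than the stated $\xi_P=M_{P\phi+\sigma_\phi}$; as sketched your argument therefore proves at best a weaker version of the theorem. (The circularity you are implicitly steering around is real: the paper's covering-number and Rademacher bounds are stated for the ball $B(\xi_P+\|\tth\|_F)$, i.e., they presuppose the present theorem, so they cannot be invoked here, and substituting $\C$ does not rescue the rate at exponential confidence.) For completeness, the paper's one-line expectation step is itself delicate --- membership of a fixed $\bTheta$ in the random set $A_n$ is a property of a single realization, so passing to $\E[P_n\tilde{f}_{\bTheta}]$ needs more justification than is displayed --- but it is an entirely different, and far lighter, device than the uniform-concentration argument you propose, and it is what lets the paper keep the exact constant $\xi_P$ in the conclusion.
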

The main idea for the proof of Theorem \ref{t1} is that $P \tilde{f}_{\hth}$ remains bounded with a high probability. Thus, $\mathfrak{R}(\hth)$ is also bounded with a high probability, which in turn implies that $\text{dist}(\hth,\tth) \le \xi_P$.
Before our main theorem, we recall the definitions of Rademacher complexity \cite{bartlett2002rademacher} and covering numbers.
\begin{definition}(Rademacher complexity)
The population Rademacher complexity of a function class $\mathcal{F}$ is defined as,
\[\mathcal{R}_n(\F) = \frac{1}{n}\E \sup_{f \in \F} \sum_{i=1}^n \epsilon_i f(\bX_i), \]
where $\epsilon_i$'s are i.i.d Rademacher random variables.
\end{definition}
\begin{definition}($\delta$-cover and covering number) For a metric space $(X,d)$, the set $X_\delta \subseteq X$ is said to be a $\delta$-cover of $X$ if for all $x \in X$, there is $x^\prime \in X_\delta$, such that $d(x,x^\prime) \le \delta$. The $\delta$-covering number of $X$, denoted by $N(\delta;X,d)$, is the size of the smallest $\delta$-cover of $X$ with respect to  $d$. 
\end{definition}

Now consider the set $\F = \{\tilde{f}_{\btheta}: \bTheta \subset B(\xi_P + \|\tth\|_F)\}$, under the  measure of distances between functions
\[d_{2n}(f,g) \triangleq \left(\frac{1}{n} \sum_{i=1}^n (f(\bX_i) - g(\bX_i))^2\right)^{-1/2}.\]

The following theorem imposes a bound on the covering number of $\F$ with respect to the $d_{2n}$ metric. The proof makes use of A~\ref{ass2}, that $\tilde{f}_{\bTheta}$ is Lipschitz  on $B(\xi_P + \|\tth\|_F)$.
\begin{theorem}\label{entropy} Under assumption A~\ref{ass2},
\[\mathcal{N}(\delta;\mathcal{F}, d_{2n} ) \le \left(\max\left\{1,\left\lfloor \frac{(\xi_P + \|\tth\|_F) C^{1/2}}{\delta}\right\rfloor\right\}\right)^{kp}; \]
$C = 2 k^{2-2/s} \tau_2^2   p  n^{-1}\sum_{i = 1}^n ( 18 \xi_P^2  + 18  \|\tth\|_F^2 +  \|\bX_i\|_2^2 )$.
\end{theorem}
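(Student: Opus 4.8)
The plan is to treat $\F$ as a parametric family indexed by $\bTheta$ ranging over the Frobenius ball $B(\xi_P + \|\tth\|_F) \subset \Real^{k\times p}$, and to convert a cover of this parameter ball into a cover of $\F$ through a uniform Lipschitz bound for the map $\bTheta \mapsto \tilde f_{\bTheta}$. The restriction of $\bTheta$ to $B(\xi_P+\|\tth\|_F)$ is precisely what Theorem~\ref{t1} licenses, so the parameter set to be covered is a ball of radius $R=\xi_P+\|\tth\|_F$ in $\Real^{kp}$.

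First I would establish the pointwise gradient bound. Fixing a datum $\bX_i$, the chain rule gives $\nabla_{\btheta_j}\tilde f_{\bTheta}(\bX_i) = \frac{\partial M_s}{\partial y_j}\,\nabla_{\btheta_j}d_\phi(\bX_i,\btheta_j)$, and from \eqref{eq:Bregman} one computes $\nabla_{\btheta_j}d_\phi(\bX_i,\btheta_j)=\nabla^2\phi(\btheta_j)(\btheta_j-\bX_i)$. Assumption~A~\ref{ass2} bounds the operator norm $\|\nabla^2\phi(\btheta_j)\|\le\tau_2$, so $\|\nabla_{\btheta_j}d_\phi(\bX_i,\btheta_j)\|\le\tau_2\|\btheta_j-\bX_i\|_2$. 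For the power-mean factor, starting from \eqref{power_mean_grad} and using the power-mean inequality one shows $\partial M_s/\partial y_j\le k^{-1/s}$ in the annealing regime $s<0$: indeed $\frac1k\sum_l y_l^s\ge\frac1k y_j^s$ forces $M_s(\by)\le k^{-1/s}y_j$ (raising to the negative power $1/s$ reverses the inequality), whence $\partial M_s/\partial y_j=\frac1k(M_s(\by)/y_j)^{1-s}\le k^{-1/s}$. Collecting the $k$ coordinate blocks,
\[\|\nabla_{\bTheta}\tilde f_{\bTheta}(\bX_i)\|_F^2=\sum_{j=1}^k\Big(\frac{\partial M_s}{\partial y_j}\Big)^2\|\nabla_{\btheta_j}d_\phi(\bX_i,\btheta_j)\|^2\le k^{-2/s}\tau_2^2\sum_{j=1}^k\|\btheta_j-\bX_i\|_2^2.\]
Since every $\btheta_j$ lies in $B(R)$, elementary triangle/Young inequalities give a uniform bound $\sum_j\|\btheta_j-\bX_i\|_2^2\le 2k\,(18\xi_P^2+18\|\tth\|_F^2+\|\bX_i\|_2^2)$ (the precise constants are immaterial, and can be loosened to match the stated $C$), so that $\sup_{\bTheta\in B(R)}\|\nabla_{\bTheta}\tilde f_{\bTheta}(\bX_i)\|_F^2\le 2k^{1-2/s}\tau_2^2(18\xi_P^2+18\|\tth\|_F^2+\|\bX_i\|_2^2)=:L_i^2$.

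Averaging over the sample turns this into a Lipschitz constant for $d_{2n}$: by the mean value theorem along the convex ball and summing the squared increments, $d_{2n}(\tilde f_{\bTheta_1},\tilde f_{\bTheta_2})^2\le\big(\frac1n\sum_i L_i^2\big)\|\bTheta_1-\bTheta_2\|_F^2$, so with $L^2:=\frac1n\sum_i L_i^2=2k^{1-2/s}\tau_2^2\,\frac1n\sum_i(18\xi_P^2+18\|\tth\|_F^2+\|\bX_i\|_2^2)$, any $\epsilon$-cover of $B(R)$ in Frobenius norm induces an $L\epsilon$-cover of $\F$ in $d_{2n}$, i.e. $\mathcal N(\delta;\F,d_{2n})\le\mathcal N(\delta/L;B(R),\|\cdot\|_F)$.

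It remains to cover the ball. Covering $B(R)\subset\Real^{kp}$ by an axis-aligned grid, a cell of side $h$ has circumradius $\frac{\sqrt{kp}}{2}h$, so resolution $\delta/L$ needs $h\le 2(\delta/L)/\sqrt{kp}$ and at most $\lfloor R\sqrt{kp}\,L/\delta\rfloor$ cells per coordinate, giving $\mathcal N(\delta/L;B(R),\|\cdot\|_F)\le(\max\{1,\lfloor R\sqrt{kp}\,L/\delta\rfloor\})^{kp}$. Finally $\sqrt{kp}\,L=(kp\cdot L^2)^{1/2}=\big(2k^{2-2/s}\tau_2^2\,p\,\frac1n\sum_i(18\xi_P^2+18\|\tth\|_F^2+\|\bX_i\|_2^2)\big)^{1/2}=C^{1/2}$, which reproduces the stated bound; note that the grid's dimension factor $\sqrt{kp}$ supplies both the extra power of $k$ (combining with $k^{1-2/s}$ to yield $k^{2-2/s}$) and the factor $p$. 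The hard part will be the uniform control of the power-mean gradient—pinning down $\partial M_s/\partial y_j\le k^{-1/s}$ via the power-mean inequality and then bookkeeping the exponents of $k$ together with the dimension factor $p$ so the grid's $\sqrt{kp}$ lands the constant exactly on $C$; the divergence-gradient and averaging steps are routine once A~\ref{ass2} is invoked.
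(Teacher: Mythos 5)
Your proposal is correct and follows essentially the same architecture as the paper's proof: establish that $\bTheta \mapsto \tilde{f}_{\bTheta}(\bx)$ is Lipschitz on $B(\xi_P + \|\tth\|_F)$ with constant of order $k^{-1/s}\tau_2(3\xi_P + 3\|\tth\|_F + \|\bx\|_2)$, then transfer an axis-aligned grid cover of the parameter ball to a $d_{2n}$-cover of $\F$. The only difference is cosmetic — you obtain the Lipschitz constant by bounding $\partial M_s/\partial y_j \le k^{-1/s}$ via the power-mean inequality and $\nabla_{\btheta_j} d_\phi = \nabla^2\phi(\btheta_j)(\btheta_j - \bx)$, whereas the paper (Lemma~\ref{lip}) bounds the finite difference directly using the Lipschitz aggregation property of $M_s$ and Lemmas~\ref{lemaa1}--\ref{lemaa2}; both yield the same constant $C$.
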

One can now appeal to this bound on the covering number of $\F$ to bound the Rademacher complexity, $\mathcal{R}_n(\F)$. More technically, we make use of Theorem \ref{entropy} and apply  Dudley's chaining arguments to produce a $\mathcal{O}(1/\sqrt{n})$ bound on the Rademacher complexity of $\F$.
\begin{theorem}
\label{rad}
Under assumptions A~\ref{ass1} and A~\ref{ass2}, 
\begin{align*}
    \mathcal{R}_n(\mathcal{F}) \, \,\le & \, \, 6 \tau_2 C^\prime(\xi_P + \|\tth\|_F) \frac{k^{3/2 - 1/s}p}{\sqrt{n}},
\end{align*}
where $C^\prime = \sqrt{2 \pi  ( 18 \xi_P^2  + 18  \|\tth\|_F^2 + \E\|\bX\|_2^2 )}$.
\end{theorem}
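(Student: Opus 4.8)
The plan is to feed the covering-number bound of Theorem~\ref{entropy} into Dudley's entropy integral and then average over the sample. I would first recall that the population complexity is the data-average of its empirical counterpart, $\mathcal{R}_n(\F) = \E_{\bX}\,\hat{\mathcal{R}}_n(\F)$ with $\hat{\mathcal{R}}_n(\F) = \frac1n\,\E_{\epsilon}\sup_{f\in\F}\sum_{i=1}^n\epsilon_i f(\bX_i)$, so that one may work conditionally on the data $\{\bX_i\}$ and take a single expectation at the very end. Conditional on the sample, Dudley's chaining bound with the empirical $L^2(P_n)$ metric $d_{2n}$ gives
\[\hat{\mathcal{R}}_n(\F) \le \frac{12}{\sqrt{n}}\int_0^\infty \sqrt{\log \mathcal{N}(\delta;\F,d_{2n})}\,d\delta.\]

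Next I would use Theorem~\ref{entropy}. Writing $D := (\xi_P + \|\tth\|_F)C^{1/2}$, the covering number there equals $1$ once $\delta \ge D$, so the integral truncates to $[0,D]$; and on that range $\log\mathcal{N}(\delta;\F,d_{2n}) \le kp\,\log(D/\delta)$ since $\lfloor D/\delta\rfloor \le D/\delta$. The substitution $\delta = Du$ then reduces the integral to a standard Gaussian moment,
\[\int_0^D \sqrt{\log(D/\delta)}\,d\delta = D\int_0^1\sqrt{\log(1/u)}\,du = D\,\Gamma(3/2) = \tfrac{\sqrt\pi}{2}\,D,\]
so that $\hat{\mathcal{R}}_n(\F) \le \frac{6\sqrt{\pi}}{\sqrt{n}}\sqrt{kp}\,D$.

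It then remains to unpack $D$ and collect constants. Expanding $C^{1/2} = \sqrt{2}\,\tau_2\,k^{1-1/s}\sqrt{p}\,\big(n^{-1}\sum_{i=1}^n(18\xi_P^2+18\|\tth\|_F^2+\|\bX_i\|_2^2)\big)^{1/2}$ and using $\sqrt{kp}\cdot\sqrt{p}=\sqrt{k}\,p$ together with $k^{1/2}\cdot k^{1-1/s}=k^{3/2-1/s}$, the factor $\sqrt{kp}\,D$ becomes $\sqrt{2}\,\tau_2(\xi_P+\|\tth\|_F)\,k^{3/2-1/s}p$ times $\big(n^{-1}\sum_i(18\xi_P^2+18\|\tth\|_F^2+\|\bX_i\|_2^2)\big)^{1/2}$. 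Finally I would take $\E_{\bX}$ and apply Jensen's inequality to the concave square root, which moves the expectation inside and replaces $n^{-1}\sum_i\|\bX_i\|_2^2$ by $\E\|\bX\|_2^2$; absorbing $6\sqrt{\pi}\cdot\sqrt{2}=6\sqrt{2\pi}$ together with the moment term into $C' = \sqrt{2\pi(18\xi_P^2+18\|\tth\|_F^2+\E\|\bX\|_2^2)}$ yields exactly the claimed bound.

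The genuinely hard analytic content---turning the Lipschitz property of $\tilde{f}_{\bTheta}$ into a polynomial covering-number estimate---is already discharged in Theorem~\ref{entropy}, so what remains is mostly careful bookkeeping. The two steps demanding care are (i) invoking the correct normalization of Dudley's inequality so that its leading constant, multiplied by $\Gamma(3/2)$, produces precisely the factor $6$ in the statement, and (ii) the exchange of expectation with the square root: because $C$ is itself data-dependent through $n^{-1}\sum_i\|\bX_i\|_2^2$, the chaining step only delivers a random, data-dependent bound, and Jensen is what converts it into a population statement. The finiteness of $\E\|\bX\|_2^2$ required there is exactly what A~\ref{ass1} supplies, since a sub-exponential $\|\bX\|_2$ has finite second moment.
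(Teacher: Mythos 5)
Your proposal is correct and follows essentially the same route as the paper's proof: Dudley's entropy integral with constant $12$ over the $d_{2n}$ metric, the covering-number bound of Theorem~\ref{entropy} yielding the $\Gamma(3/2)=\sqrt{\pi}/2$ factor, and a final Jensen step to replace the empirical second moment $n^{-1}\sum_i\|\bX_i\|_2^2$ by $\E\|\bX\|_2^2$. The only cosmetic difference is that you truncate the integral at the radius $D$ where the entropy vanishes rather than at the diameter $2D$ as the paper does; both give the identical value.
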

The Rademacher complexity bound plays a key role in providing uniform concentration bounds on $\|P_n - P\|_{\F} = \sup_{f \in \F} |P_n f - P f|$. Since the functions in $\F$ are not bounded (as we do not assume $\bx$ is bounded), the classical results by \citet{bartlett2002rademacher} do not  directly apply. However, appealing to the sub-exponential property of $\|\bX\|_2$, we  apply recent concentration results derived by \cite{maurer2021concentration}. Formally, our bound is as follows:

\begin{theorem}\label{concentration}
Suppose assumptions A~\ref{ass1}-\ref{ass2} hold. Then for $n \ge \log(2/\delta) \ge \frac{1}{2}$, with probability at least $1-\delta$,
\begin{align*}
   & \|P_n  - P \|_{\F}  \, \,\le\, \, 12 \tau_2  C^\prime (\xi_P + \|\tth\|_F) \frac{k^{3/2 - 1/s}p}{\sqrt{n}} \\
 & +16 e \sigma \tau_2 k^{1-1/s}(1 + \xi_P + \|\tth\|_F  ) \sqrt{\frac{2 \log (2/\delta)}{n}}.
\end{align*}
\end{theorem}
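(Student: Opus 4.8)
The plan is to split the supremum into its expectation and a fluctuation around it,
\[
\|P_n - P\|_{\F} = \E\|P_n - P\|_{\F} + \big(\|P_n - P\|_{\F} - \E\|P_n - P\|_{\F}\big),
\]
and to control the two pieces by symmetrization and by a sub-exponential concentration inequality, respectively. For the first piece I would invoke the standard symmetrization inequality $\E\|P_n - P\|_{\F} \le 2\mathcal{R}_n(\F)$, so that the Rademacher bound already established in Theorem~\ref{rad} immediately yields the leading term $12\tau_2 C'(\xi_P + \|\tth\|_F)k^{3/2-1/s}p/\sqrt{n}$, the factor $2$ simply doubling the constant $6$ appearing there. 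No new work beyond Theorem~\ref{rad} is needed here.

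The substance lies in the fluctuation term, for which I would first record a data-dependent Lipschitz estimate for the class in the argument $\bx$. Differentiating $\tilde{f}_{\bTheta}(\bx) = M_s(d_\phi(\bx,\btheta_1),\dots,d_\phi(\bx,\btheta_k))$ by the chain rule gives $\nabla_{\bx}\tilde{f}_{\bTheta}(\bx) = \sum_{j} w_{j}\,(\nabla\phi(\bx) - \nabla\phi(\btheta_j))$ with nonnegative power-mean weights $w_j$. Assumption A~\ref{ass2} bounds $\|\nabla\phi(\bx) - \nabla\phi(\btheta_j)\|_2 \le \tau_2\|\bx - \btheta_j\|_2$, and since every admissible center obeys $\btheta_j \in B(\xi_P + \|\tth\|_F)$ by Theorem~\ref{t1}, combining this with the bound on the power-mean gradient contributes a factor of order $k^{1-1/s}$ and produces the uniform estimate
\[
\sup_{\bTheta \subset B(\xi_P + \|\tth\|_F)}\|\nabla_{\bx}\tilde{f}_{\bTheta}(\bx)\|_2 \le \tau_2\, k^{1-1/s}\big(\|\bx\|_2 + \xi_P + \|\tth\|_F\big).
\]
Consequently the increments of the class are linear in $\|\bx\|_2$, so under A~\ref{ass1} the envelope of $\F$ has sub-exponential $\psi_1$ norm of order $\sigma\tau_2 k^{1-1/s}(1 + \xi_P + \|\tth\|_F)$, where $\sigma = \|\,\|\bX\|_2\,\|_{\psi_1}$.

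Because the functions in $\F$ are unbounded, the bounded-difference route underlying the classical bounds of \citet{bartlett2002rademacher} is unavailable, and this is the crux. I would instead apply the sub-exponential concentration inequality of \citet{maurer2021concentration} to $\Phi(\bX_1,\dots,\bX_n) = \|P_n - P\|_{\F}$, taking the envelope estimate above as the sub-exponential scale. In the regime $n \ge \log(2/\delta) \ge \tfrac12$ the inequality operates in its sub-Gaussian branch, giving, with probability at least $1-\delta$,
\[
\big|\|P_n - P\|_{\F} - \E\|P_n - P\|_{\F}\big| \le 16 e\, \sigma\tau_2 k^{1-1/s}(1 + \xi_P + \|\tth\|_F)\sqrt{\frac{2\log(2/\delta)}{n}},
\]
and adding this to the symmetrized expectation bound yields the stated result. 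The main obstacle is the middle step: establishing the bound on the power-mean weights for $s \le 1$ and then translating the unbounded, data-dependent Lipschitz constant into precisely the sub-exponential scale and the explicit constants ($16e$, the $\sqrt{2}$, the exact power $k^{1-1/s}$) demanded by the Maurer--Pontil inequality. Matching these constants, rather than the symmetrization or the high-level decomposition, is where the care is required.
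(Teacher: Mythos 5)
Your high-level architecture matches the paper's: split $\|P_n-P\|_{\F}$ into $\E\|P_n-P\|_{\F}$ plus a fluctuation, control the expectation by symmetrization ($\E\|P_n-P\|_{\F}\le 2\mathcal{R}_n(\F)$, doubling the constant from Theorem~\ref{rad}), and control the fluctuation by the sub-exponential concentration inequality of Maurer and Pontil. The factor $k^{1-1/s}$ and the target constant $16e$ are also the right ones. The problem is the middle step you yourself flag as the crux.

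Your route to the sub-exponential scale goes through a Lipschitz-in-$\bx$ bound, $\|\nabla_{\bx}\tilde{f}_{\bTheta}(\bx)\|_2 \lesssim \tau_2 k^{1-1/s}(\|\bx\|_2+\xi_P+\|\tth\|_F)$, from which you conclude that ``the increments of the class are linear in $\|\bx\|_2$.'' They are not. Integrating that gradient bound along the segment from $\bX_i$ to an independent copy $\bX_i'$ gives an increment bound of order $\tau_2(\max\{\|\bX_i\|_2,\|\bX_i'\|_2\}+\xi_P+\|\tth\|_F)\,\|\bX_i-\bX_i'\|_2$, which is \emph{quadratic} in the data through the $\nabla\phi(\bx)$ term (equivalently, $|\phi(\bX_i)-\phi(\bX_i')|\le \tau_2\max\{\|\bX_i\|_2,\|\bX_i'\|_2\}\|\bX_i-\bX_i'\|_2$). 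Under the first bullet of A~\ref{ass1} the square of a sub-exponential variable is only $\psi_{1/2}$, not $\psi_1$, so your claimed $\psi_1$ envelope of order $\sigma\tau_2 k^{1-1/s}(1+\xi_P+\|\tth\|_F)$ does not follow, and Proposition 7(ii) of Maurer--Pontil cannot be invoked at that scale. The tell is that your argument never uses the second bullet of A~\ref{ass1}, namely $\sigma_\phi=\|\phi(\bX)\|_{\psi_1}<\infty$, which is assumed precisely to close this gap. The paper's proof instead expands the Bregman divergence exactly: $d_\phi(\bX_i,\btheta_j)-d_\phi(\bX_i',\btheta_j)=\phi(\bX_i)-\phi(\bX_i')-\langle\nabla\phi(\btheta_j),\bX_i-\bX_i'\rangle$. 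The first piece is kept as a difference of two $\psi_1$ variables (triangle inequality gives $2\sigma_\phi$, with no Lipschitz-in-$\bx$ step), and only the second piece is bounded linearly in $\|\bX_i-\bX_i'\|_2$ with coefficient $\|\nabla\phi(\btheta_j)\|_2\le\tau_2(\xi_P+\|\tth\|_F)$ from Lemma~\ref{lemaa1}, since it is the \emph{center}, not the data point, that lives in the bounded ball. This yields $\|\|Y_i\|_{\mathcal{B}}\|_{\psi_1}\le \frac{2k^{1-1/s}}{n}(\sigma_\phi+\tau_2(\xi_P+\|\tth\|_F)\sigma)$ and then the stated bound (after running the one-sided argument twice and replacing $\delta$ by $\delta/2$). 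To repair your proof you should replace the gradient-in-$\bx$ estimate with this exact Bregman decomposition and carry $\sigma_\phi$ through the computation.
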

From Theorem \ref{t1}, we know that with a very high probability, $\hth \subset B(\xi_p + \|\tth\|_F)$. Using this result, it is not difficult to then show that with a very high probability, $\mathfrak{R}(\hth) \le 2 \|P_n - P\|_{\F}$, which can be bounded by Theorem \ref{concentration}. Finally, the next theorem provides a bound on the excess risk.
\begin{theorem}\label{er}
Let Assumptions \ref{ass1}-\ref{ass2} hold. Then whenever $n \ge \log(2/\delta) \ge \frac{1}{2}$, with probability at least $1-\delta - e^{-cn}$,
\begin{align*}
   & \mathfrak{R}(\hth)  \, \, \le \, \, 24 \tau_2  C^\prime (\xi_P + \|\tth\|_F) \frac{k^{3/2 - 1/s}p}{\sqrt{n}} \\
 & +32 e \sigma \tau_2 k^{1-1/s}(1 + \xi_P + \|\tth\|_F  ) \sqrt{\frac{2 \log (2/\delta)}{n}}.
\end{align*}
\end{theorem}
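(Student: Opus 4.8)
The plan is to obtain the excess risk bound as a direct consequence of the two preceding high-probability results—Theorem~\ref{t1}, which confines the empirical minimizer to the ball $B(\xi_P + \|\tth\|_F)$, and Theorem~\ref{concentration}, which controls the uniform deviation $\|P_n - P\|_{\F}$ over that same ball. The only analytic content I would supply is the standard empirical-process reduction showing that, on the event where $\hth$ lies in the ball, the excess risk is at most twice the uniform deviation, i.e. $\mathfrak{R}(\hth) \le 2\|P_n - P\|_{\F}$; everything else is a union bound followed by substitution of the bound from Theorem~\ref{concentration}.

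First I would write the excess risk through the usual ``add and subtract'' decomposition,
\begin{align*}
\mathfrak{R}(\hth) = P\tilde{f}_{\hth} - P\tilde{f}_{\tth} = \big(P - P_n\big)\tilde{f}_{\hth} + \big(P_n\tilde{f}_{\hth} - P_n\tilde{f}_{\tth}\big) + \big(P_n - P\big)\tilde{f}_{\tth}.
\end{align*}
Since $\hth$ is by definition the minimizer of $\bTheta \mapsto P_n\tilde{f}_{\bTheta}$, the middle bracket is nonpositive and may be dropped. Each of the two remaining terms is a signed $(P_n - P)$ evaluation of a function in $\F$, and is therefore bounded in absolute value by $\|P_n - P\|_{\F}$, provided that both $\tilde{f}_{\hth}$ and $\tilde{f}_{\tth}$ belong to $\F$. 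This yields $\mathfrak{R}(\hth) \le 2\|P_n - P\|_{\F}$.

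Next I would verify the two membership requirements underlying this step. The center $\tth$ trivially satisfies $\|\tth\|_F \le \xi_P + \|\tth\|_F$, so $\tilde{f}_{\tth} \in \F$ unconditionally. The inclusion $\tilde{f}_{\hth} \in \F$ is where the probabilistic content enters: it holds precisely on the event $\{\hth \subset B(\xi_P + \|\tth\|_F)\}$ furnished by Theorem~\ref{t1}, which occurs with probability at least $1 - e^{-cn}$. Intersecting this with the event of probability at least $1-\delta$ on which the conclusion of Theorem~\ref{concentration} holds, a union bound gives a combined probability of at least $1 - \delta - e^{-cn}$. On this intersection I would substitute the explicit bound of Theorem~\ref{concentration} into $\mathfrak{R}(\hth) \le 2\|P_n - P\|_{\F}$; the factor of two simply doubles the two constants ($24 = 2 \times 12$ and $32 = 2 \times 16$), producing exactly the stated inequality.

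The main (and essentially the only) obstacle is conceptual rather than computational: the uniform concentration bound of Theorem~\ref{concentration} is stated over the \emph{fixed} function class $\F$ indexed by centroids in $B(\xi_P + \|\tth\|_F)$, and it can be applied to the data-dependent object $\tilde{f}_{\hth}$ only once we know that $\hth$ itself lands in that ball. This is exactly the role played by Theorem~\ref{t1}, and it is the reason the two failure probabilities $e^{-cn}$ and $\delta$ must be combined. Because the data are unbounded, one cannot circumvent this with a crude worst-case argument over all of $\Real^{k\times p}$, so the boundedness-of-the-minimizer result is an indispensable ingredient rather than a mere technical convenience.
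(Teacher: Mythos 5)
Your proposal is correct and follows essentially the same route as the paper's own proof: the add-and-subtract decomposition using the empirical minimality of $\hth$, the reduction $\mathfrak{R}(\hth) \le 2\|P_n - P\|_{\F}$ valid on the event from Theorem~\ref{t1}, a union bound with the event from Theorem~\ref{concentration}, and substitution of that bound with the constants doubled. Your remarks on why Theorem~\ref{t1} is indispensable (the class $\F$ is fixed while $\hth$ is data-dependent and the data are unbounded) are accurate and, if anything, make the logic more explicit than the paper's terse presentation.
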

\paragraph{Remark} It is important to note that the bounds derived in Theorem \ref{er} include the Frobenius norm of the population cluster centroid $\|\tth\|_F$, as well as terms such as $\E \|\bX\|_2^2$ and $\|\|\bX\|_2\|_{\psi_1}$ measuring the spread of the data. Intuitively, as spread of the data increases, it can be expected that the performance of Bregman power $k$-means  deteriorates with the added noise. This phenomenon is reflected in the bounds on the excess risk.

\paragraph{Strong Consistency and $\sqrt{n}$-consistency}
In the classical domain of keeping $k$ and $p$ fixed, one can recover asymptotic results \cite{pollard1981strong} such as strong consistency and $\sqrt{n}$-consistency of the sample cluster centroids.  We say that the sequence of the set of cluster centroids $\{\bTheta_n \}_{n \in \mathbb{N}}$ converges to $\bTheta$ if $\lim_{n \to \infty}\text{dist}(\bTheta_n, \bTheta ) =0$. 
The following theorem asserts that indeed $\hth$ is strongly consistent for $\bTheta$, and moreover admits a parametric convergence rate of $\mathcal{O}(n^{-1/2})$. Before stating the result, recall that for a sequence of random variables $\{X_n\}_{n \in \mathbb{N}}$, we say that $X_n = \mathcal{O}_P(a_n)$, for a sequence of reals $\{a_n\}_{n \in \mathbb{N}}$, if $X_n/a_n$ is \textit{tight}, or bounded in probability.

\begin{theorem}\label{consistency}
If $p$ is kept fixed, then under Assumptions \ref{ass1}-\ref{ass3}, $\hth \xrightarrow{a.s.} \tth$. Moreover, $\mathfrak{R}(\hth) = \mathcal{O}_P(n^{-1/2})$.
\end{theorem}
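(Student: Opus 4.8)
The plan is to treat the two assertions separately: strong consistency follows from a uniform strong law combined with the identifiability condition A~\ref{ass3}, while the $\sqrt n$-rate is essentially a repackaging of the finite-sample excess-risk bound of Theorem~\ref{er}.

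For strong consistency I would first invoke Theorem~\ref{t1} together with Borel--Cantelli: since $\sum_n e^{-cn} < \infty$, the event $\{\hth \subset B(\xi_P + \|\tth\|_F)\}$ holds for all but finitely many $n$ almost surely, so eventually both $\hth$ and $\tth$ lie in the compact domain defining the class $\F$. Next I would upgrade the in-probability concentration of Theorem~\ref{concentration} to an almost sure statement by taking $\delta = \delta_n = n^{-2}$, which makes $\sum_n \delta_n$ summable and yields $\|P_n - P\|_{\F} = \mathcal{O}(\sqrt{\log n / n})$ eventually, whence $\|P_n - P\|_{\F} \to 0$ almost surely. The decisive step is then the standard sandwich exploiting optimality of $\hth$: writing $\mathfrak{R}(\hth)$ as the telescoping sum $(P\tilde f_{\hth} - P_n\tilde f_{\hth}) + (P_n\tilde f_{\hth} - P_n\tilde f_{\tth}) + (P_n\tilde f_{\tth} - P\tilde f_{\tth})$, the middle term is nonpositive because $\hth$ minimizes $P_n\tilde f$, while each outer term is bounded in absolute value by $\|P_n - P\|_{\F}$ once $\hth,\tth$ lie in the domain of $\F$, giving $\mathfrak{R}(\hth) \le 2\|P_n - P\|_{\F} \to 0$ almost surely.

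It remains to convert $\mathfrak{R}(\hth) \to 0$ into $\text{dist}(\hth,\tth) \to 0$, and here A~\ref{ass3} does the work and the main subtlety lies. The quantity $M_\epsilon$ is nonincreasing as $\epsilon \downarrow 0$, so $M_\epsilon \downarrow M_0 \ge 0$, and I would argue $M_0 = 0$. If instead $M_0 > 0$, then by definition of the infimum $M_\epsilon$ one can, for each small $\epsilon$, find $\bTheta_\epsilon$ with $\text{dist}(\bTheta_\epsilon,\tth) > M_0/2$ yet $\mathfrak{R}(\bTheta_\epsilon) \le \epsilon$; applying A~\ref{ass3} at a single fixed threshold $\epsilon_0$ confines these configurations to a bounded set, so compactness together with continuity of $\bTheta \mapsto \mathfrak{R}(\bTheta)$ (which follows from continuity of $d_\phi$ and $M_s$ and the integrability supplied by A~\ref{ass1} via dominated convergence) produces a limit point $\bTheta^\star \ne \tth$ with $\mathfrak{R}(\bTheta^\star) = 0$, contradicting uniqueness of the population minimizer $\tth$. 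With $M_\epsilon \downarrow 0$ in hand, $\mathfrak{R}(\hth) \le \epsilon$ eventually forces $\text{dist}(\hth,\tth) \le M_\epsilon$, and sending $\epsilon \to 0$ delivers $\text{dist}(\hth,\tth) \to 0$ almost surely. I expect this identifiability-to-parameter step, specifically justifying $M_\epsilon \downarrow 0$ and the continuity and uniqueness it rests on, to be the principal obstacle; everything else is bookkeeping.

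Finally, the rate $\mathfrak{R}(\hth) = \mathcal{O}_P(n^{-1/2})$ is immediate from Theorem~\ref{er} once $p$ and $k$ are fixed. Given $\eta > 0$, setting $\delta = \eta/2$ turns that bound into $\sqrt n\,\mathfrak{R}(\hth) \le A + B\sqrt{\log(4/\eta)}$ with probability at least $1 - \eta/2 - e^{-cn}$, where $A,B$ depend only on the fixed quantities $\tau_2, \sigma, k, p, \xi_P, \|\tth\|_F$. Choosing $M = A + B\sqrt{\log(4/\eta)}$ and taking $n$ large enough that $e^{-cn} \le \eta/2$ gives $P(\sqrt n\,\mathfrak{R}(\hth) > M) \le \eta$, which is exactly tightness of $\sqrt n\,\mathfrak{R}(\hth)$ and hence the claimed $\mathcal{O}_P(n^{-1/2})$ rate.
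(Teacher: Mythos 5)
Your proof is correct and follows essentially the same route as the paper's: both obtain almost-sure convergence of the excess risk by choosing a summable sequence of confidence levels in the finite-sample bound (the paper instantiates Theorem~\ref{er} with $\delta = 2\exp(-\sqrt{n}\epsilon^2/(2C^2))$ rather than your $\delta_n = n^{-2}$ plus the optimality sandwich, but this is the same Borel--Cantelli mechanism), then invoke A~\ref{ass3} to pass from risk convergence to $\text{dist}(\hth,\tth)\to 0$, and both read the $\mathcal{O}_P(n^{-1/2})$ rate directly off Theorem~\ref{er}. The one place you go beyond the paper is in justifying that $M_\epsilon \downarrow 0$ as $\epsilon \downarrow 0$ via monotonicity, compactness, and continuity of $\bTheta \mapsto \mathfrak{R}(\bTheta)$; the paper asserts this step from A~\ref{ass3} in a single line, so your extra care there (which does quietly use uniqueness of $\tth$ up to permutation, implicit in the theorem's statement) fills a genuine gap rather than adding redundancy.
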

\section{Empirical Performance and Results}
We close our assessment of the proposed method with a thorough empirical study. Previous works on Bregman clustering largely focus on  mathematical aspects, and the few that include data examples are limited to low dimensions. We extend their designs  to settings as dimension increases, for a large number of clusters, and for a breadth of exponential family distributions with varying parameters, followed by application to a rainfall dataset. 
An open-source Python implementation of the proposed method, including reproducible code for all data generating mechanisms and experiments in this paper, is available and maintained in a repository by the first author\footnote{Publicly available at \url{https://github.com/avellal14/bregman_power_kmeans}}.
 \begin{figure}[h] \vspace{-5pt}
   \hspace{-20pt}
    \includegraphics[width=9.9cm]{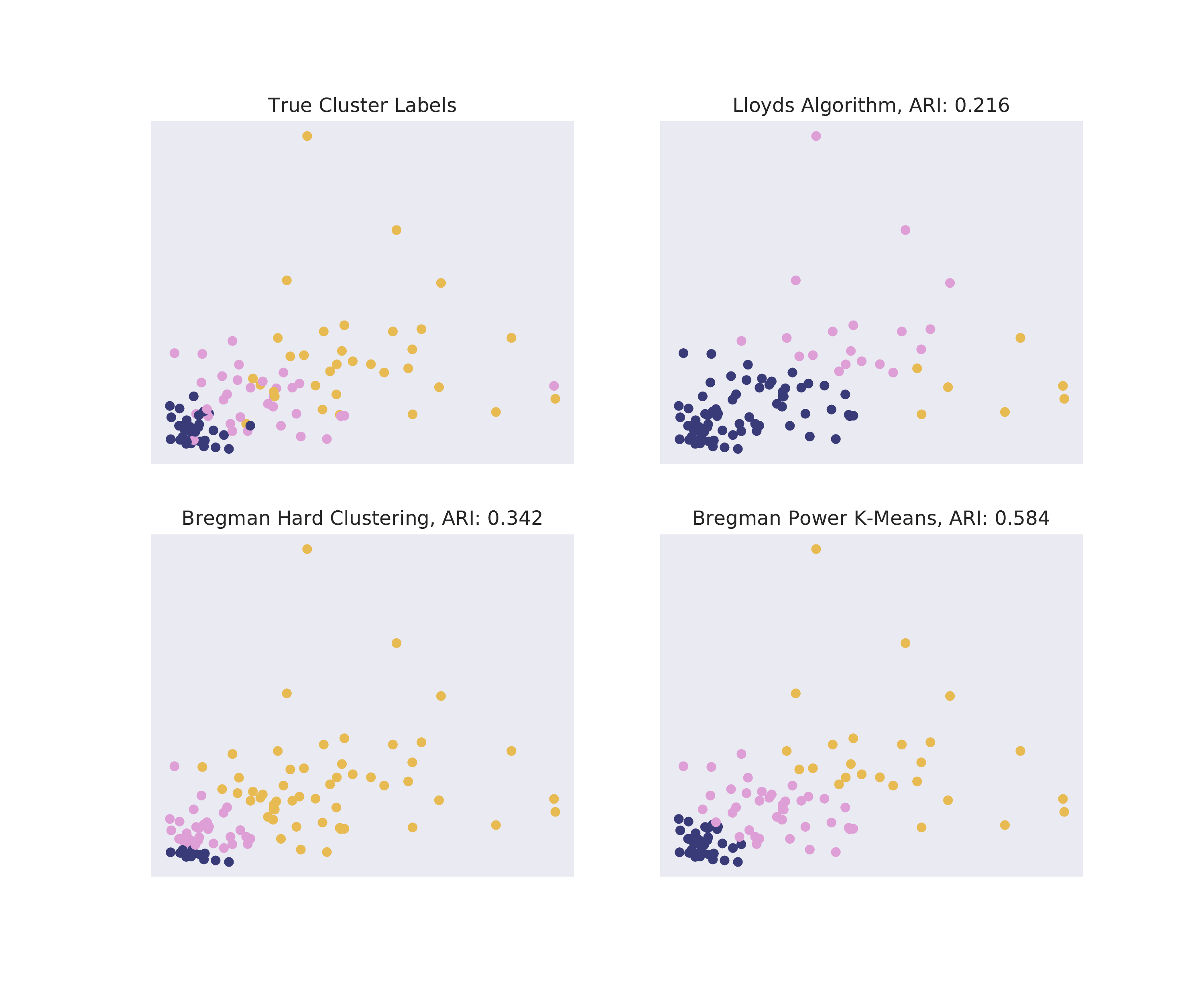}
    \vspace{-30pt}\caption{A visual comparison of clustering solutions.} \label{fig:visual}
\end{figure}

\textbf{Experiment 1:} We begin by considering data simulated from various exponential families in the plane. Synthetic datasets are generated in $\mathbb{R}^2$ from Gaussian, Binomial, Poisson, and Gamma distributions from true centers at $(10,10)$, $(20,20)$, and $(40,40)$. 
For the normal case $\sigma^2=16$, while the binomial parameter $n=200$ so that clusters feature heteroskedasticity with variance implied by the mean relationship. Poisson data are sampled coordinate-wise  coordinate with intensity parameter $10,20$, and $40$ respectively. Finally, each Gamma coordinate was sampled to have the same means, with shape parameters fixed at $\alpha=15$. 


As discussed in Section 2 and illustrated in Table 1, a Bregman divergence for each of these distributions may provide an ideal measure of dissimilarity for clustering data generated under its corresponding exponential family. To investigate this, we apply Lloyd's $k$-means algorithm, Bregman hard clustering, the original power $k$-means method, and our proposed Bregman power $k$-means algorithm on each of these four settings. Centers are randomly initialized according to a uniform distribution spanning the range of all the data points, and each peer method starts from matched initializations to ensure a fair comparison. An $s_0$ value of $-0.2$ was used for power $k$-means and our method. 

A visual comparison of clustering solutions obtained on one simulated dataset by each peer method on Gamma data with a shape parameter $\alpha=5$ is displayed in Figure \ref{fig:visual}. Comparing to the ground truth labels, this illustrates that Bregman power $k$-means is much more effective than its competing methods in distinguishing between points drawn from different Gamma clusters, despite the data not being perfectly separable. This is particularly apparent for Lloyd's algorithm, which fails due to seeking spherical clusters. For a closer look, we report the mean (and standard deviation) adjusted Rand index (ARI) of solutions under each algorithm computed over 250 trials in Table \ref{tab:ARI}. We observe that other than performing on par in the Gaussian case, Bregman power $k$-means consistently achieves the best performance in the other exponential family settings.




\begin{table}[h]
	\centering
	\caption{Mean and (standard deviation) ARI of Lloyd's algorithm, Bregman hard clustering, and their power means counterparts.}
	
	\begin{tabular}{|p{1.2cm}|p{1.0cm}|p{1.2cm}|p{1.0cm}|p{1.2cm}|}
         \hline
          & Lloyd's & Bregman Hard  & Power  & Bregman Power  \\
         \hline
         Gaussian & 0.828 (0.012) & 0.837 (0.012) & 0.927 (0.003) & 0.927 (0.003) \\
         Binomial & 0.730 (0.014) & 0.886 (0.011) & 0.915 (0.004) & 0.931 (0.003) \\
         Poisson & 0.723 (0.014) & 0.882 (0.010) & 0.888 (0.006) & 0.916 (0.004) \\
         Gamma & 0.484 (0.009) & 0.868 (0.005) & 0.677 (0.008) & 0.879 (0.004)\\
         \hline
    \end{tabular} 	\label{tab:ARI}

\end{table}

\textbf{Experiment 2:} To better understand the behavior of each clustering method as the shape of distributions change, we now revisit the Gamma setting while varying the shape parameters  $\alpha=1,\ldots,20$. We also increase the problem dimension to $p=20$, with all other simulation details are unchanged from Experiment 1. Since the centers are held fixed as $\alpha$ is increased, higher $\alpha$ values correspond to less skewed Gamma distributions with lower variances. The mean ARIs (each again computed over 250 random simulations) against increasing shape parameter values are summarized in Figure \ref{fig:gamma}. Due to the high skewness causing significant overlap between clusters, smaller shape parameters result in the poorest performance across all four peer methods. Each method seems to reach an inflection point somewhere in the range of $\alpha=3-6$, after which further increases in $\alpha$ minimally change the overall shape of the distribution. As expected, Bregman power $k$-means achieves the best performance while Lloyd's algorithm struggles across the board. 
It is worth also noting that  (Euclidean) power $k$-means eventually overtakes Bregman hard clustering, and mostly maintains better performance than Bregman hard clustering for high shape parameter values. This at first surprising result can be reconciled by the interpretation of Gamma distributions as a sum of exponential random variables (explicitly when $\alpha$ is an integer), so that a sum of i.i.d variables looks closer to normal as $\alpha$ increases. Eventually, the improvement bestowed by annealing through poor minima becomes more advantageous than information that the data are in fact Gamma as they resemble Gaussian data more and more closely. The gap in performance between our proposed method and Bregman hard clustering reiterates the merits of annealing through power means. 


\begin{figure}[h]
    \centering
    \includegraphics[width=8cm]{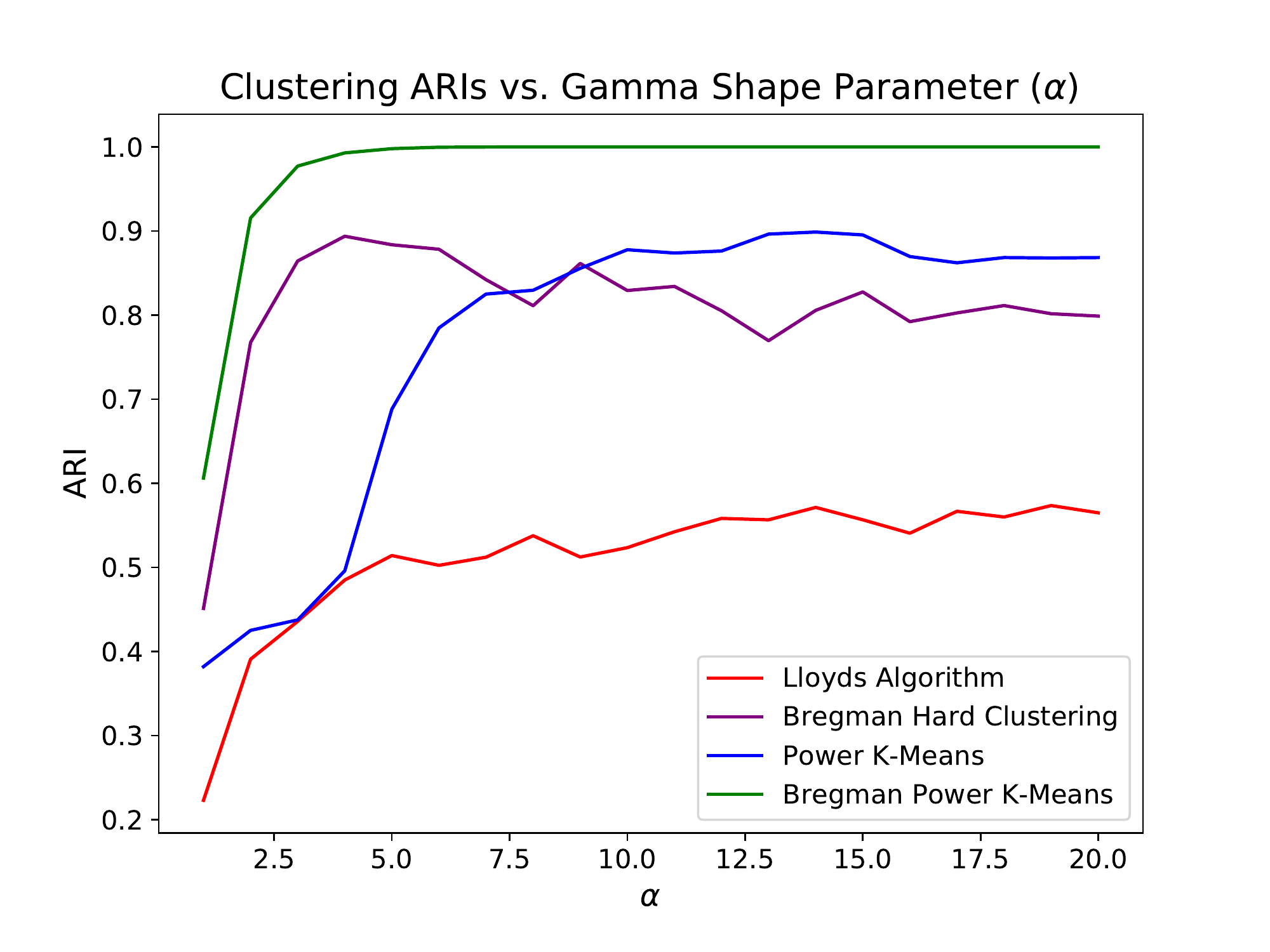}
    \caption{Performance as Gamma shape parameter varies.} 
    \label{fig:gamma} \vspace{-1pt}
\end{figure}

\begin{table*}[t]
	\centering
	\caption{Average (standard deviation) ARI across 250 trials as dimension increases, Poisson data.}
	\begin{tabular}{p{3.35cm}|p{1.95cm} p{1.95cm} p{1.85cm} p{1.95cm} p{1.95cm} p{1.95cm}}
          & $p=2$ & $p=5$ & $p=10$ & $p=20$ & $p=50$\\
         \hline
         Lloyd's & 0.418 (0.005) & 0.488 (0.006) & 0.610 (0.009) & 0.607 (0.012) & 0.614 (0.015)\\
         \hline
         Bregman Hard & 0.425 (0.005) & 0.497 (0.006) & 0.613 (0.009) & 0.653 (0.011) & 0.720 (0.014)\\
         \hline
         Power, \ $s_0=-0.2$ & 0.451 (0.005) & \bf{0.577} (0.005) & 0.690 (0.005) & 0.561 (0.010) & 0.440 (0.001)\\
         \hline
         Breg. Power, \ $s_0=-.2$ & \bf{0.458} (0.005) & \bf{0.577} (0.005) & 0.718 (0.005) & 0.734 (0.010) & 0.442 (0.002)\\
         \hline
         Power, \ $s_0=-1$ & 0.445 (0.005) & 0.575 (0.005) & 0.692 (0.005) & 0.745 (0.005) & 0.485 (0.008)\\
         \hline
         Breg. Power, \ $s_0=-1$ & 0.452 (0.005) & 0.575 (0.005) & \bf{0.720} (0.005) & \bf{0.806} (0.005) & 0.610 (0.014)\\
         \hline
         Power, \ $s_0=-3$ & 0.432 (0.005) & 0.548 (0.006) & 0.696 (0.005) & 0.784 (0.004) & 0.873 (0.004) \\
         \hline
         Breg. Power, \ $s_0=-3$ & 0.412 (0.005) & 0.555 (0.005) & 0.712 (0.005) & 0.804 (0.005) & 0.924 (0.003)\\
         \hline
         Power, \ $s_0=-9$ & 0.412 (0.005) & 0.531 (0.006) & 0.693 (0.006) & 0.782 (0.006) & 0.915 (0.003)\\
         \hline
         Breg. Power, \ $s_0=-9$ & 0.321 (0.006) & 0.510 (0.007) & 0.696 (0.006) & 0.792 (0.005) & \bf{0.925} (0.003)\\
         \hline
    \end{tabular} \vspace{-10pt} \label{tab:dim}
\end{table*}

\begin{table*}[t]
	\centering
	\caption{Average (standard deviation) runtimes (sec) across 250 trials as dimension increases, Poisson data.}
	\begin{tabular}{p{3.35cm}|p{1.95cm} p{1.95cm} p{1.85cm} p{1.95cm} p{1.95cm} p{1.95cm}}
          & $p=2$ & $p=5$ & $p=10$ & $p=20$ & $p=50$\\
         \hline
         Lloyd's & 0.082 (0.004) & 0.067 (0.002) & 0.058 (0.002) & 0.052 (0.001) & 0.056 (0.002)\\
         \hline
         Bregman Hard & 0.607 (0.012) & 0.493 (0.008) & 0.413 (9e-4) & 0.439 (0.005) & 0.506 (0.008)\\
         \hline
         Power, \ $s_0=-0.2$ & 0.012 (0.001) & 0.001 (3e-4) & 0.009 (2e-4) & 0.007 (4e-4) & 0.004 (1e-4)\\
         \hline
         Breg. Power, \ $s_0=-.2$ & 0.022 (0.001) & 0.020 (8e-4) & 0.017 (4e-4) & 0.021 (7e-4) & 0.009 (2e-4)\\
         \hline
         Power, \ $s_0=-1$ & 0.006 (1e-4) & 0.005 (1e-4) & 0.007 (1e-4) & 0.016 (4e-4) & 0.006 (3e-4)\\
         \hline
         Breg. Power, \ $s_0=-1$ & 0.011 (3e-4) & 0.012 (3e-4) & 0.014 (3e-4) & 0.036 (7e-4) & 0.019 (9e-4)\\
         \hline
         Power, \ $s_0=-3$ & 0.005 (1e-4) & 0.006 (1e-4) & 0.007 (2e-4) & 0.0059 (1e-4) & 0.008 (2e-4) \\
         \hline
         Breg. Power, \ $s_0=-3$ & 0.007 (1e-4) & 0.01 (2e-4) & 0.013 (4e-4) & 0.011 (3e-4) & 0.017 (4e-4)\\
         \hline
         Power, \ $s_0=-9$ & 0.004 (9e-5) & 0.006 (2e-4) & 0.011 (3e-4) & 0.055 (1e-4) & 0.007 (2e-4)\\
         \hline
         Breg. Power, \ $s_0=-9$ & 0.004 (2e-5) & 0.01 (2e-4) & 0.019 (4e-4) & 0.010 (2e-4) & 0.012 (3e-4)\\
         \hline
    \end{tabular} \vspace{-10pt} \label{tab:dim_times}
\end{table*}

\textbf{Experiment 3:} To better understand the effect of annealing in various feature dimensions $p$, we take a closer look at the Poisson setting when dimensionality ranges from $p=2$ to $p=50$. In this setting, true centers are given by $[40,40]$, $[50,50]$ and $[60,60]$ in the planar case, while we scale the separation inversely by a $\sqrt{d}$ factor as is standard to avoid the problem becoming ``too easy" in larger dimensions \cite{aggarwal2001surprising}. Mean ARIs and standard deviations across 250 random trials are detailed in Table \ref{tab:dim}, which also considers various initial powers $s_0$ for power $k$-means and our proposed method. We reproduce the finding of \citet{xu2019power} under the Gaussian setting that while annealing provides advantages without the need to carefully tune $s_0$, more negative starting values tend to yield a more pronounced advantage as $p$ increases.  The best performing method for each dimension $p$ is boldfaced. Similarly, we provide runtime details in Table \ref{tab:dim_times}. Though results vary based on implementations, we see that the proposed method is highly efficient in the data settings we consider here, and outpace competitors by an order of magnitude. Similar trends are conveyed under the other data generating mechanisms. 

\begin{figure}[h] \vspace{-8pt}
    \includegraphics[width=8cm]{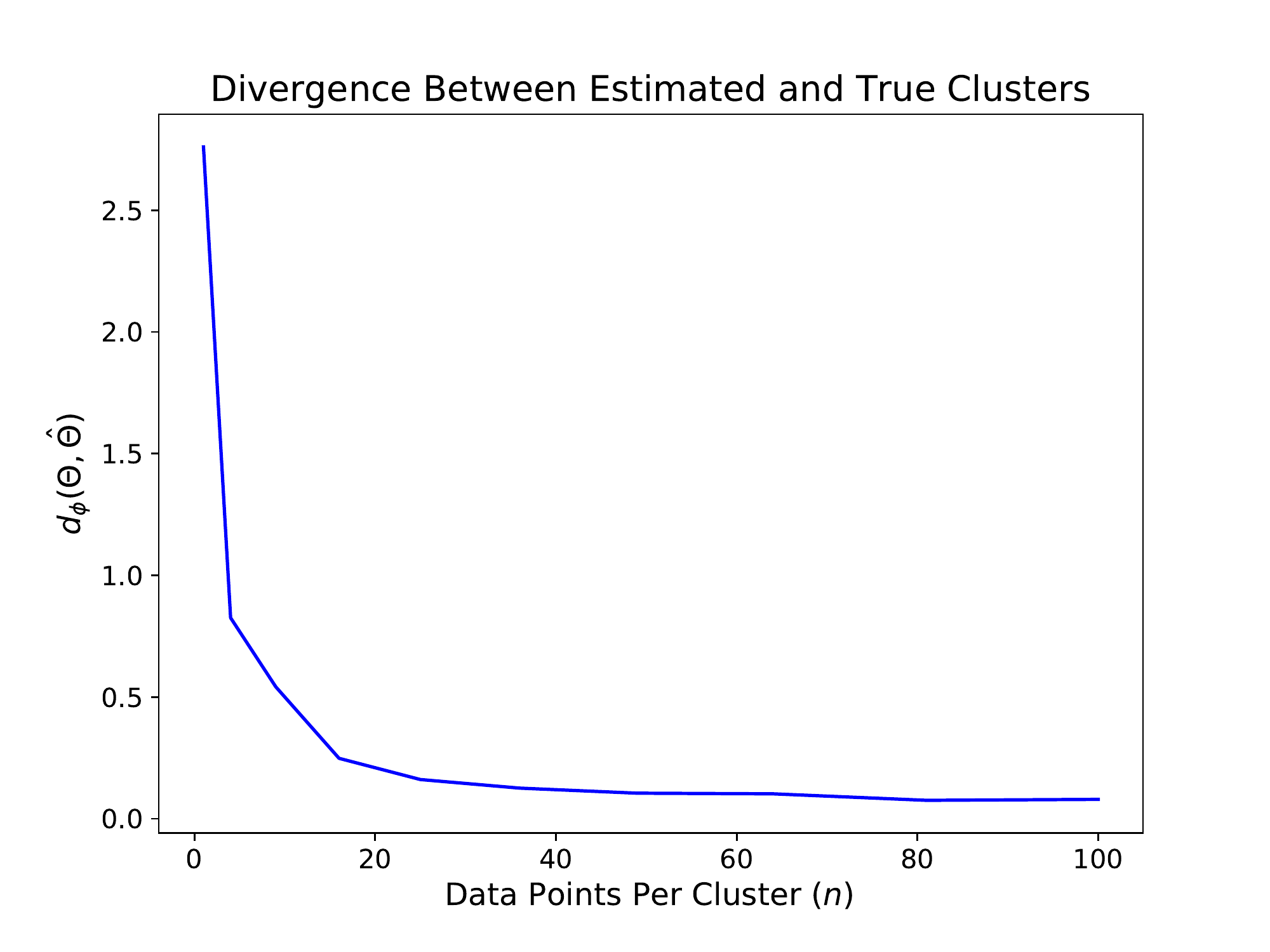}
    \vspace{-10pt}\caption{We see that the empirical convergence of Bregman power $k$-means to the true cluster centroids agrees with the $\mathcal{O}_P(n^{-1/2})$ convergence proposed in Theorem 3.8.} \label{fig:n-consistency}\vspace{-5pt}
\end{figure}

\textbf{Experiment 4:} As suggested by an anonymous reviewer, to check that empirical performance matches our theory that cluster centroids are $\sqrt{n}$-consistent (cf. Theorem 3.8), we include another experiment using the Poisson data setting ($p = 5$). We sample from the same three cluster centroids that we do in Experiment 3, and consider the Bregman divergence $d_\phi$ between the cluster centroids estimated by Bregman Power $k$-means ($\hth$) and the true cluster centroids ($\bTheta$) as the number of data points per cluster, $n$, varies from 1 to 100. For each $n$, we plot the lowest $d_\phi(\bTheta, \hth)$ across 100 random trials to decrease how likely it is that we report the divergence at a local optimum (as the theoretical result pertains to the true minimizers of the objective). The results are plotted in Figure \ref{fig:n-consistency}.

\begin{figure}[h] \vspace{-8pt}
   \hspace{-15pt}
    \includegraphics[width=9.6cm]{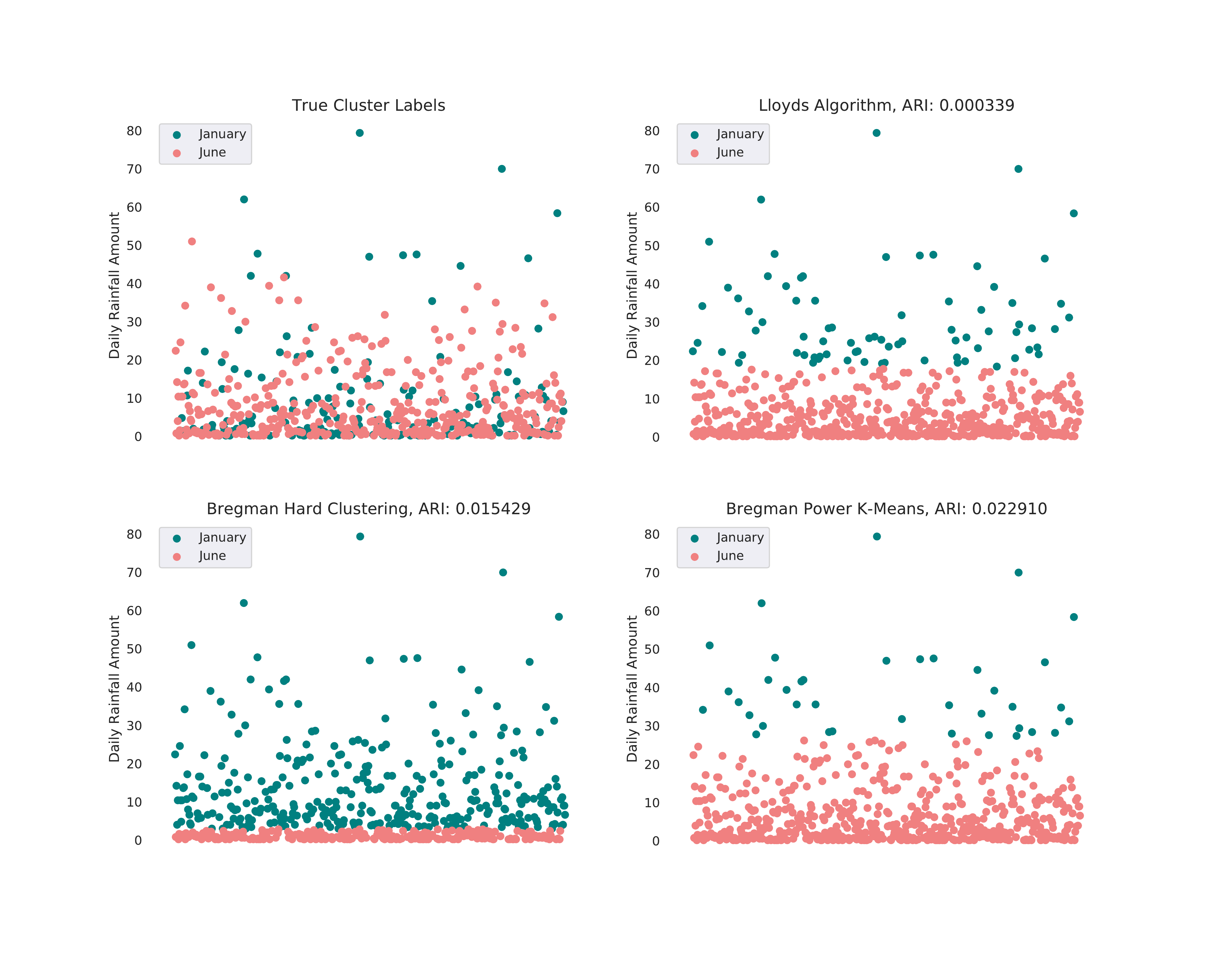}
    \vspace{-35pt}\caption{A visual comparison of clustering solutions under Lloyd's algorithm for $k$-means, Bregman hard clustering, and the proposed method on rainfall dataset. We see the ground truth features significant overlap between the groups, yet Bregman power $k$-means is able to find a closer partition than its peer methods.} \label{fig:visual_rainfall}\vspace{-2pt}
\end{figure}

\textbf{Rainfall data:} We now turn to a comparison of peer methods on a real dataset. Following recent Bregman clustering work of \citet{brecheteau2021robust}, we consider clustering rainfall data under a Gamma model with shape parameter $\alpha = 4$. Here we consider data from the Italian region of San Martino di Castrozza, collected across the years $1970-1990$ during the months of January ($177$ points) and June ($397$ points)\footnote{Publicly available at \url{https://cran.r-project.org/web/packages/hydroTSM/vignettes/hydroTSM_Vignette-knitr.pdf}}. Only days with non-zero rainfall amounts are included, and their values are often modeled by domain experts according to a Gamma distribution  \cite{CoeSternRainfall}. For Power $k$-means and Bregman Power $k$-means, an $s_0$ value of $-3.0$ is used. 

Similarly to the results in \citep{brecheteau2021robust}, we observe very low separability between clusters likely caused by the large number of days with very little rainfall as evident in the ground truth plotted in Figure \ref{fig:visual_rainfall}. Absolute ARIs of all methods in turn are quite low, yet Bregman power $k$-means offers the best performance, with a $\mathbf{48\%}$ improvement relative to the next best contender, Bregman hard clustering. Furthermore, Bregman power $k$-means has a runtime of 0.017 sec in comparison to the 0.975 sec runtime of Bregman hard clustering, offering a significant speedup just as we observed in Experiment 3. Both Bregman clustering methods perform an order of magnitude better than Lloyd's algorithm as well as power $k$-means. This is not unexpected given the highly skewed shapes of the data that bear little similarity to spherical clusters such as those arising under a Gaussian assumption. A visual comparison of the solutions can be found in Figure \ref{fig:visual_rainfall}.


\section{Discussion}
In this article, we have demonstrated the merits of adapting the recent power $k$-means method to Bregman divergences both through several novel theoretical contributions and a thorough empirical study. In line with what we would expect from the theory, we find that across a number of exponential family data generating mechanisms and as cluster numbers and dimension increase, Bregman power $k$-means consistently outperforms its Euclidean counterparts. On the other hand, we observe a marked improvement over Bregman hard clustering without annealing, showing that the majorization-minimization scheme proposed here successfully evades poor local minima. By tailoring our algorithm to the case of Bregman divergences within the family of power means, we preserve the simplicity of Lloyd's classical method, in contrast to generic gradient-based approaches put forth for a general robust clustering context that can be applied. 


A number of immediate extensions are worth exploring. The problem of feature selection alongside clustering is worthy of investigation, as discovering relevant variables may be crucial in high dimensional settings with low signal-to-noise ratio \citep{chakraborty2020entropy}, and may be an interpretable goal in its own right. Another useful generalization involves clustering matrix-variate data. The notion of Bregman divergence readily applies to matrices: 
\[ d_\phi(\bV, \bU) =  \phi(\bV) - \phi(\bU)- \langle \nabla\phi(\bU), \bV - \bU \rangle \]
where $\langle \bV, \bU \rangle = \text{Tr}(\bV \bU^T)$ denotes the inner product. 

In exploring penalized or robust versions where the objective may no longer admit closed form updates, it is worth leveraging the information geometry behind Bregman divergences to design effective iterative extensions of our proposed method. One can benefit from second-order rate behavior within first-order schemes---as an example,  \citet{raskutti2015information} show that mirror descent performs \textit{natural} gradient descent along the dual Riemannian manifold under a Bregman proximity term. For exponential families, the Riemannian metric of the parameter space coincides with the Fisher information, a property that has proven useful in a number of machine learning applications \citep{hoffman2013stochastic}. 
Moreover, the explicit connection to exponential family likelihoods suggests it is natural to explore Bayesian approaches that may leverage the geometry of Bregman divergences \citep{ahn2020efficient,duan2021bayesian}, especially toward model-based clustering on exponential family mixtures.

From an optimization perspective, this work falls within the unified continuous optimization perspective set forth by \citet{teboulle2007unified}. There the authors suggested future extensions to various proximity measures, as well as investigation into the notion of ``best" smoothing functions for given formulations. Our article contributes significant progress on the former, while our novel rate analyses which provide results dependent on $\tth$  suggest a new theoretical tool toward formally tackling the latter. Within the power means framework, this open problem is also augmented by the question of optimal annealing schedules for decreasing parameters such as $s$ within a family of smoothing functions. We invite readers to consider these questions and  possible extensions in the many exciting directions for future work.


\newpage
\appendix
\onecolumn
\section{Proof of Optimization Results from Section \ref{problem statement}}
\label{optimization proofs}
\paragraph{Proof of Theorem \ref{o1}}
Let $J(\bx):=d_\phi(\bx,\btheta)$ and let $P_{\mathscr{C}}(\btheta)$ be the projection of $\btheta$ onto $\mathscr{C}$, i.e. $P_{\mathscr{C}}(\btheta) = \argmin_{\bx \in \C} d_\phi(\bx,\btheta)$. Since $P_{\mathscr{C}}(\btheta)$ minimizes $J(\cdot)$ over $\C$, there exists a subgradient $\bd \in \partial J(P_\C(\btheta))$ such that 
\begin{equation}
\label{aaa1}
\langle \bd, \bx-P_\C(\btheta)\rangle \ge 0 .    
\end{equation}

We note that $J(P_\C(\btheta))=\{\nabla\phi(P_\C(\btheta)) - \nabla \phi(\btheta)\}$. Thus,  from equation \eqref{aaa1}, 
\begin{equation}
    \langle \nabla\phi(P_\C(\btheta)) - \nabla \phi(\btheta), \bx-P_\C(\btheta)\rangle \ge 0 .
\end{equation}
We proceed by observing that 
\[
    d_\phi(\bx,\btheta) - d_{\phi}(\bx,P_\C(\btheta)) - d_\phi(P_\C(\btheta),\btheta) \, \, = \, \, \langle \nabla\phi(P_\C(\btheta)) - \nabla \phi(\btheta), \bx-P_\C(\btheta)\rangle \, \,\ge \, \, 0.
\]
Thus, for any $\bTheta = [\btheta_1:\btheta_2:\dots: \btheta_k]^\top \in \Real^{k \times p}$, and so
\begingroup
\allowdisplaybreaks
\begin{align*}
  &d_\phi(\bx,\btheta_j)\, \, \ge \, \, d_{\phi}(\bx,P_\C(\btheta_j))+d_\phi(P_\C(\btheta_j),\btheta_j) \, \, \ge \, \, d_{\phi}(\bx,P_\C(\btheta_j))\, \quad \forall\, j=1,\dots,k. \\ 
  \implies & M_s\left(d_\phi(\bx,P_\C(\btheta_1)),\dots,d_\phi(\bx,P_\C(\btheta_k))\right) \, \,\le \, \, M_s\left(d_\phi(\bx,\btheta_1),\dots,d_\phi(\bx,\btheta_k)\right)\\
  \implies & \sum_{i=1}^n M_s\left(d_\phi(\bX_i,P_\C(\btheta_1)),\dots,d_\phi(\bX_i,P_\C(\btheta_k))\right) \, \, \le \, \, \sum_{i=1}^n M_s\left(d_\phi(\bX_i,\btheta_1),\dots,d_\phi(\bX_i,\btheta_k)\right) \\
\implies & f_s(\bTheta^\prime ) \, \, \le \, \, f_s(\bTheta),
\end{align*}
\endgroup
where $\bTheta^\prime = [P_{\C}(\btheta_1):\dots: P_{\C}(\btheta_k)]^\top \in \Real^{k \times p}$. Thus, $\inf_{\bTheta} f_s(\bTheta) = \inf_{\bTheta \subset C} f_s(\bTheta)$., and the result follows.
\paragraph{Proof of Theorem \ref{o2}}
For any $\bTheta \subset \C$ and $s_m \downarrow -\infty$, by monotonicity of power means, $f_{s_m}(\bTheta) \downarrow f_{k-means} (\bTheta)$ monotonically. The result now immediately follows from Dini's theorem \cite{rudin1964principles} in analysis as $\C$ is compact. 
\section{Proof of Statistical Theory Results from Section \ref{stat theory}}
Before we proceed to the proofs of main results of section \ref{stat theory}, we first state and prove the following two lemmas. The two results hold for any $M>0$.
\begin{lemma}\label{lemaa1}
For any $\btheta \in B(M)$, $\|\nabla \phi (\btheta)\|_2 \le \tau_2 M$.
\end{lemma}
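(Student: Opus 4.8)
The plan is to reduce the statement to a single application of the $\tau_2$-Lipschitz continuity of $\nabla \phi$ granted by Assumption A\ref{ass2}, anchored at the origin. The crucial observation is the normalization made without loss of generality early in the paper, namely $\phi(\mathbf{0}) = \nabla \phi(\mathbf{0}) = 0$. This lets me treat $\mathbf{0}$ as a reference point at which the gradient vanishes, so that bounding $\|\nabla \phi(\btheta)\|_2$ becomes equivalent to bounding the increment $\|\nabla \phi(\btheta) - \nabla \phi(\mathbf{0})\|_2$.

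First I would recall that $\nabla \phi$ being $\tau_2$-Lipschitz means
\[
\|\nabla \phi(\bx) - \nabla \phi(\by)\|_2 \le \tau_2 \|\bx - \by\|_2 \quad \text{for all } \bx, \by \in \Real^p.
\]
Next I would specialize this to $\bx = \btheta$ and $\by = \mathbf{0}$, and substitute the normalization $\nabla \phi(\mathbf{0}) = 0$ to obtain $\|\nabla \phi(\btheta)\|_2 \le \tau_2 \|\btheta\|_2$. Finally, since $\btheta \in B(M)$ gives $\|\btheta\|_2 \le M$, I conclude $\|\nabla \phi(\btheta)\|_2 \le \tau_2 M$, as claimed. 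The argument holds verbatim for every $M > 0$, which is exactly the uniformity the lemma asserts.

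There is no genuine obstacle here; the only subtlety is remembering that the normalization $\nabla \phi(\mathbf{0}) = 0$ is available (it was justified as a harmless convention when Bregman divergences were introduced), since without it one would only get $\|\nabla \phi(\btheta)\|_2 \le \tau_2 M + \|\nabla \phi(\mathbf{0})\|_2$. This lemma is an auxiliary estimate whose role is downstream: it furnishes the pointwise gradient bound needed to verify the Lipschitz behavior of $\tilde{f}_{\bTheta}$ on $B(\xi_P + \|\tth\|_F)$, which in turn feeds the covering-number bound of Theorem \ref{entropy} and the Rademacher complexity bound of Theorem \ref{rad}.
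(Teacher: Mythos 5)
Your proof is correct and is essentially identical to the paper's: both write $\|\nabla\phi(\btheta)\|_2 = \|\nabla\phi(\btheta)-\nabla\phi(\mathbf{0})\|_2 \le \tau_2\|\btheta\|_2 \le \tau_2 M$, relying on the normalization $\nabla\phi(\mathbf{0})=0$ and the $\tau_2$-Lipschitz property of $\nabla\phi$ from A2. Your explicit remark that the normalization is what makes the origin a valid anchor point is a helpful clarification that the paper leaves implicit.
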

\begin{proof}
For any $\btheta \in B(M)$,
\begin{align*}
    \|\nabla \phi (\btheta)\|_2 = \| \nabla \phi (\btheta) - \nabla \phi (\mathbf{0})\|_2 \le \tau_2 \|\btheta\|_2 \le \tau_2 M.
\end{align*}
\end{proof}
\begin{lemma}\label{lemaa2}
$\phi$ is $\tau_2 M $-Lipschitz on $B(M)$.
\end{lemma}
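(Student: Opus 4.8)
The plan is to show that $\phi$ is $\tau_2 M$-Lipschitz on the ball $B(M)$ by appealing to the standard calculus fact that a differentiable function whose gradient is uniformly bounded on a convex set is Lipschitz on that set, with Lipschitz constant equal to the gradient bound. The key observation is that the ball $B(M)$ is convex, so any two points in it can be joined by a line segment lying entirely inside $B(M)$, and along that segment Lemma \ref{lemaa1} controls $\|\nabla \phi\|_2$.

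First I would fix arbitrary $\bx, \by \in B(M)$ and parametrize the segment between them by $\bgamma(t) = \by + t(\bx - \by)$ for $t \in [0,1]$. Since $B(M)$ is convex and $\bx, \by \in B(M)$, we have $\bgamma(t) \in B(M)$ for every $t \in [0,1]$. Next I would write $\phi(\bx) - \phi(\by)$ via the fundamental theorem of calculus applied to $t \mapsto \phi(\bgamma(t))$, obtaining
\[
\phi(\bx) - \phi(\by) = \int_0^1 \langle \nabla \phi(\bgamma(t)), \bx - \by \rangle \, dt.
\]
Then I would bound the integrand using Cauchy--Schwarz together with Lemma \ref{lemaa1}: for each $t$, $|\langle \nabla \phi(\bgamma(t)), \bx - \by\rangle| \le \|\nabla \phi(\bgamma(t))\|_2 \, \|\bx - \by\|_2 \le \tau_2 M \|\bx - \by\|_2$, where the last step uses $\bgamma(t) \in B(M)$ and Lemma \ref{lemaa1}. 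Integrating this bound over $t \in [0,1]$ yields $|\phi(\bx) - \phi(\by)| \le \tau_2 M \|\bx - \by\|_2$, which is exactly the desired Lipschitz estimate.

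There is essentially no hard part here; the argument is routine once convexity of $B(M)$ and Lemma \ref{lemaa1} are invoked. The only point that requires a modicum of care is ensuring that the entire segment $\bgamma(t)$ remains inside $B(M)$ so that the gradient bound of Lemma \ref{lemaa1} is valid at every point along the path of integration, which is immediate from convexity of the Euclidean ball. I would therefore present the proof as a short direct computation rather than invoking the mean value theorem in a less transparent way, since the integral representation makes the role of the gradient bound completely explicit.
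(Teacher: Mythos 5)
Your proof is correct and follows essentially the same route as the paper: both arguments reduce the claim to the gradient bound of Lemma \ref{lemaa1} along the segment joining the two points (valid by convexity of the ball) together with Cauchy--Schwarz. The only difference is that you use the integral form of the fundamental theorem of calculus where the paper invokes the mean value theorem, which is an immaterial variation.
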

\begin{proof}
Appealing to the mean value theorem, 
\[\phi(\btheta) - \phi(\btheta^\prime) = \langle \nabla \phi(\xi) , \btheta - \btheta^\prime \rangle,\]
for some $\xi $ in the convex combinations of $\btheta$ and $\btheta^\prime$. Clearly, $\xi \in B(M)$. Thus, by Lemma \ref{lemaa1}, $\|\nabla \phi(\xi) \|_2 \le \tau_2 M$. Finally, appealing to Cauchy-Schwartz inequality, we get,
\[ |\phi(\btheta) - \phi(\btheta^\prime)| \le \|\nabla \phi(\xi)\|_2 \|\btheta - \btheta^\prime\|_2 \le \tau_2 M \|\btheta - \btheta^\prime\|_2 \]
\end{proof}

\paragraph{Proof of Theorem \ref{t1}}
\begin{proof}
Let $\boldsymbol{0}_{k \times p}$ be the $k \times p$ real matrix whose entries are all zero. By the definition of $\bTheta_n$, 
\[P_n \tilde{f}_{\hth} \le P_n \tilde{f}_{\boldsymbol{0}_{k \times p}} = \frac{1}{n} \sum_{i=1}^n \phi (\bX_i).\]
By Bernstein's inequality, we note that
\begin{align}
 & \sP\left( \frac{1}{n} \sum_{i=1}^n \phi (\bX_i) - P \phi > t\right) \le    \exp\left\{ - \frac{cn}{\sigma_\phi} \min\left\{\frac{t^2}{\sigma_\phi},t\right\}\right\} . \label{s1}
\end{align}
 Upon taking $t = \sigma_\phi$, we observe that the RHS of \eqref{s1} reduces to $ e^{-cn}$. 
 
 Now, let $A_n = \{\bTheta \in \Real^{k \times p}: P_n \tilde{f}_{\bTheta} \le  P \phi + \sigma_\phi\}$. Then with probability at least $1 - e^{-cn}$, $\hth \in A_n$. Next, by definition of the set $A_n$, the inequality 
 \[P \tilde{f}_{\bTheta} = \E_{\{\bX\}_{i \in [n]} \overset{\text{i.i.d.}}{\sim P}} \left[ P_n \tilde{f}_{\bTheta}\right] \, \,\le\,\,   \E_{\{\bX\}_{i \in [n]} \overset{\text{i.i.d.}}{\sim P}}  \left[ P \phi + \sigma_\phi \right] =  P \phi + \sigma_\phi\]
  follows if $\bTheta \in A_n$. Thus, for all $n \in \mathbb{N}$, we have $A_n \subseteq \{\bTheta \in \Real^{k \times p}: P \tilde{f}_{\bTheta} \le  P \phi + \sigma_\phi\}$. 
 
 Lastly, consider the set $\mathcal{E} = \{\bTheta \in \Real^{k \times p}: P \tilde{f}_{\bTheta} \le  P \phi + \sigma_\phi\}$. For any $\bTheta \in \mathcal{E}$,
 \[\mathfrak{R}(\bTheta) = P \tilde{f}_{\bTheta} - P \tilde{f}_{\tth} \le P \tilde{f}_{\bTheta} \le  P \phi + \sigma_\phi.\]
 By assumption A~\ref{ass3}, $\text{dist}(\bTheta,\tth) \le M_{P \phi + \sigma_\phi} =\xi_P$. It is easy to see that $\text{dist}(\hth,\tth) \le \xi_P \implies \|\hat{\btheta}^{(n)}_j\|_2 \le \xi_P + \|\tth\|_F$. Thus, $\hth \subset B(\xi_P + \|\tth\|_F)$, with probability at least $1 - e^{-cn}$.
\end{proof}

Before we prove Theorem \ref{entropy}, we state and prove the following lemma, which asserts the Lipschitzness of $\tilde{f}_{\bTheta}$.
\begin{lemma}\label{lip}
Suppose A~\ref{ass2} holds. Then, for any $\bTheta, \bTheta^\prime \subset B(\xi_P + \|\tth\|_F)$ and $\bx \in \Real^p$, \[|\tilde{f}_{\bTheta}(\bx) - \tilde{f}_{\bTheta^\prime}(\bx)| \le k^{-1/s} \tau_2 ( 3 \xi_P + 3 \|\tth\|_F + \|\bx\|_2 ) \sum_{j = 1}^k  \|\btheta_j - \btheta_j^\prime\|_2.\]
\end{lemma}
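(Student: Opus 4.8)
The plan is to treat $\tilde{f}_{\bTheta}(\bx) = M_s(d_\phi(\bx,\btheta_1),\dots,d_\phi(\bx,\btheta_k))$ as a composition and control the two layers separately: the Lipschitz behavior of the power mean $M_s$ in its arguments, followed by the Lipschitz behavior of each coordinate map $\btheta_j \mapsto d_\phi(\bx,\btheta_j)$. Since $s<0$ in the annealing regime of interest, I would first bound the partial derivatives of $M_s$ on the positive orthant. Writing $A = \frac1k\sum_i y_i^s$, the formula \eqref{eq:firstpartial} reads $\partial M_s/\partial y_j = A^{1/s-1}\frac1k y_j^{s-1}$. The elementary bound $A \ge \frac1k y_j^s$, combined with the fact that the exponent $1/s-1$ is negative (so $t\mapsto t^{1/s-1}$ is decreasing), gives $A^{1/s-1} \le (\frac1k y_j^s)^{1/s-1}$; substituting and letting the powers of $y_j$ telescope collapses everything to $0 \le \partial M_s/\partial y_j \le k^{-1/s}$.

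By the mean value theorem along the segment joining two coordinatewise-positive vectors $\bu,\bv$ (which stays in the positive orthant by convexity), this yields $|M_s(\bu) - M_s(\bv)| \le k^{-1/s}\sum_j |u_j - v_j|$. Applying this with $u_j = d_\phi(\bx,\btheta_j)$ and $v_j = d_\phi(\bx,\btheta_j')$ reduces the lemma to bounding $\sum_j |d_\phi(\bx,\btheta_j) - d_\phi(\bx,\btheta_j')|$.

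Next I would estimate a single difference. Expanding both Bregman divergences, the $\phi(\bx)$ terms cancel, leaving $[\phi(\btheta_j') - \phi(\btheta_j)] + [\langle\nabla\phi(\btheta_j'),\bx-\btheta_j'\rangle - \langle\nabla\phi(\btheta_j),\bx-\btheta_j\rangle]$. The first bracket is at most $\tau_2 M\,\|\btheta_j - \btheta_j'\|_2$ by Lemma \ref{lemaa2}, with $M = \xi_P + \|\tth\|_F$. For the second bracket I would add and subtract $\langle\nabla\phi(\btheta_j),\bx-\btheta_j'\rangle$, splitting it into $\langle\nabla\phi(\btheta_j')-\nabla\phi(\btheta_j),\bx-\btheta_j'\rangle$ and $\langle\nabla\phi(\btheta_j),\btheta_j-\btheta_j'\rangle$. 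The former is controlled by $\tau_2(\|\bx\|_2 + M)\|\btheta_j-\btheta_j'\|_2$ via the $\tau_2$-Lipschitzness of $\nabla\phi$ and Cauchy--Schwarz, while the latter is at most $\tau_2 M\,\|\btheta_j-\btheta_j'\|_2$ by Lemma \ref{lemaa1}. Summing these three contributions gives the factor $\tau_2(3M + \|\bx\|_2) = \tau_2(3\xi_P + 3\|\tth\|_F + \|\bx\|_2)$, and the coefficient $3$ is precisely the bookkeeping of these three pieces. Combining with the power-mean estimate yields the claim.

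The main obstacle is pinning the power-mean Lipschitz constant to be exactly $k^{-1/s}$: this depends crucially on the sign of $s$ so that the monotonicity of $t\mapsto t^{1/s-1}$ runs in the favorable direction, and on the cancellation of the $y_j$ exponents after substitution. The $d_\phi$ difference is then routine, provided the add-and-subtract decomposition is chosen so that each term is bounded either by the Lipschitz constant of $\nabla\phi$ or by the gradient-norm estimate of Lemma \ref{lemaa1}.
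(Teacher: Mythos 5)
Your proposal is correct and follows essentially the same route as the paper: the same $k^{-1/s}$-Lipschitz bound for the power mean in its arguments, followed by the identical three-term decomposition of $d_\phi(\bx,\btheta_j)-d_\phi(\bx,\btheta_j^\prime)$ controlled via Lemmas \ref{lemaa1} and \ref{lemaa2}. The only difference is cosmetic: the paper cites \citet{beliakov2010lipschitz} for the power-mean Lipschitz constant, whereas you derive it directly from the partial-derivative formula \eqref{eq:firstpartial}, which is a nice self-contained touch.
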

\begin{proof}
For any $\bx \in \Real^p$,
\begin{align}
    |\tilde{f}_{\bTheta}(\bx) - \tilde{f}_{\bTheta^\prime}(\bx)| = &  \, \, \, k^{-1/s} \sum_{j = 1}^k |d_\phi(\bx,\btheta_j) - d_\phi(\bx,\btheta_j^\prime)| \label{eqq1}\\
       \le & \, \,  k^{-1/s} \sum_{j = 1}^k \big[|\phi(\btheta^\prime_j) - \phi(\btheta_j)| + \|\nabla \phi(\btheta^\prime_j) - \nabla \phi(\btheta_j)\|_2 (\|\bx\|_2 +\xi_P + \|\tth\|_F ) \nonumber   \\ 
       & \qquad + \|\nabla \phi (\btheta_j)\|_2 \|\btheta_j - \btheta_j^\prime\| \big] \nonumber\\ 
      \le & \, \, k^{-1/s} \sum_{j = 1}^k \big[ \tau_2 (\xi_P + \|\tth\|_F) + \tau_2 (\|\bx\|_2 +\xi_P + \|\tth\|_F ) + \tau_2 (\xi_P + \|\tth\|_F)\big] \|\btheta_j - \btheta_j^\prime\|_2 \label{eqq2}\\
      \le & \, \, k^{-1/s} \tau_2 ( 3 \xi_P + 3 \|\tth\|_F + \|\bx\|_2 ) \sum_{j = 1}^k  \|\btheta_j - \btheta_j^\prime\|_2 . \nonumber
\end{align}
Here, equation \eqref{eqq1} follows from \cite{beliakov2010lipschitz} and inequality \eqref{eqq2} follows from Lemmas \ref{lemaa1} and \ref{lemaa2}.
\end{proof}

\paragraph{Proof of Theorem \ref{entropy}}
\begin{proof}
We begin by noting that if $|\theta_{ij} - \theta^\prime_{ij}| \le \rho$, then for all $i,j \in [n]$, 
\begin{align}
     d_{2n}^2 (\tilde{f}_{\bTheta},\tilde{f}_{\bTheta^\prime}) & = \frac{1}{n} \sum_{i=1}^n (\tilde{f}_{\bTheta}(\bX_i) - \tilde{f}_{\bTheta^\prime}(\bX_i))^2 \nonumber \\
    & \le k^{-2/s} \tau_2^2 (\sum_{j = 1}^k  \|\btheta_j - \btheta_j^\prime\|_2)^2 \frac{1}{n}\sum_{i = 1}^n ( 3 \xi_P + 3 \|\tth\|_F + \|\bX_i\|_2 )^2 \label{eq1}\\
    &  \le 2 k^{-2/s} \tau_2^2  (\sum_{j = 1}^k  \|\btheta_j - \btheta_j^\prime\|_2)^2 \frac{1}{n}\sum_{i = 1}^n ( 9 (\xi_P + \|\tth\|_F)^2 +  \|\bX_i\|_2^2 ) \nonumber\\
    &  \le 2 k^{2-2/s} \tau_2^2   p \rho^2 \frac{1}{n}\sum_{i = 1}^n ( 18 \xi_P^2  + 18  \|\tth\|_F^2 +  \|\bX_i\|_2^2 ). \label{s3}
\end{align}
Above, equation \eqref{eq1} follows from Lemma \ref{lip}. We take $\rho = \delta^{-1}\big( 2 k^{2-2/s} \tau_2^2   p  n^{-1}\sum_{i = 1}^n ( 18 \xi_P^2  + 18  \|\tth\|_F^2 +  \|\bX_i\|_2^2 ) \big)^{-1/2} $, which simplifies the RHS of \eqref{s3} to equal $\delta^2$.
Next, let
\[
\mathfrak{N}_\rho = \begin{cases}
\{-(\xi_P + \|\tth\|_F) + \frac{i}{2\rho }: 1 \le i \le \lfloor \frac{\xi_P + \|\tth\|_F}{\rho} \rfloor\} & \text{ if } \rho < \xi_P + \|\tth\|_F\\
\{\mathbf{0}\} & \text{ Otherwise.}
\end{cases}
\]
We consider a $(k\rho \sqrt{p})$-net of $B(\xi_P + \|\tth\|_F)$,
$\mathfrak{M}_\rho = \{\bTheta \in \Real^{k \times p}: \theta_{ij} \in \mathfrak{N}_\rho \}.$ From \eqref{s3}, $\mathfrak{F}_\rho = \{\tilde{f}_{\bTheta}: \bTheta \in \mathfrak{M}_\rho\}$ constitutes a $\delta$-cover of $\mathcal{F}$. Thus,
\[\mathcal{N}(\delta;\mathcal{F}, d_{2n} ) \le |\mathfrak{F}_\rho| \le |\mathfrak{M}_\rho| \le  \left(\max\left\{1,\left\lfloor \frac{\xi_P + \|\tth\|_F}{\rho}\right\rfloor\right\}\right)^{kp}.\]
Plugging in the value of $\rho$ gives us the desired result.
\end{proof}

\paragraph{Proof of Theorem \ref{rad}}
\begin{proof}
It is easy to show using lemma \ref{lip} that
\begin{align*}
    \text{diam}(\F) & = \sup_{f,g \in \F} d_{2n}(f,g) \\
    & = \sup_{\bTheta \subset(\xi_P + \|\tth\|_F)} \sqrt{ 2 k^{-2/s} \tau_2^2  \big[\sum_{j = 1}^k  \|\btheta_j - \btheta_j^\prime\|_2\big]^2 \frac{1}{n}\sum_{i = 1}^n 9 (\xi_P + \|\tth\|_F)^2 +  \|\bX_i\|_2^2 } \\
    & \le 2 C^{1/2}(\xi_P + \|\tth\|_F).
\end{align*}
Now applying Dudley's chaining \cite{van1996weak},
\begin{align}
     \widehat{\mathcal{R}}_n(\mathcal{F}) &\le \frac{12}{\sqrt{n}} \int_0^{\text{diam}(\mathcal{F})} \sqrt{\log \mathcal{N}(\delta; \mathcal{F},d_{2n})} d\delta \nonumber\\
    & \le \frac{12\sqrt{kp}}{\sqrt{n}} \int_0^{2 C^{1/2}(\xi_P + \|\tth\|_F)} \sqrt{\log  \left(\max\left\{1,\left\lfloor \frac{M_\epsilon C^{1/2}}{\delta}\right\rfloor\right\}\right)} d\delta \nonumber\\
    & = \frac{12\sqrt{kp}}{\sqrt{n}}  C^{1/2}(\xi_P + \|\tth\|_F) \Gamma(3/2) \nonumber\\
    & = 6 \sqrt{\pi C}M_\epsilon \sqrt{\frac{kp}{n}}   \nonumber\\
    & = 6 \tau_2 \sqrt{2 \pi     \sum_{i = 1}^n \big( 18 \xi_P^2  + 18  \|\tth\|_F^2 + n^{-1}\|\bX_i\|_2^2 \big)}\,(\xi_P + \|\tth\|_F) \frac{k^{3/2 - 1/s}p}{\sqrt{n}}   \nonumber
\end{align}
Finally,
\begin{align*}
\mathcal{R}_{n}(\F) & = \E \widehat{\mathcal{R}}_n(\mathcal{F}) \\
& \le  6 \tau_2 \E \sqrt{2 \pi     \sum_{i = 1}^n \big( 18 \xi_P^2  + 18  \|\tth\|_F^2 + n^{-1}\|\bX_i\|_2^2 \big)}(\xi_P + \|\tth\|_F) \frac{k^{3/2 - 1/s}p}{\sqrt{n}}    \\
& \le 6 \tau_2  \sqrt{2 \pi     \sum_{i = 1}^n \big( 18 \xi_P^2  + 18  \|\tth\|_F^2 + \E\|\bX\|_2^2 \big)}(\xi_P + \|\tth\|_F) \frac{k^{3/2 - 1/s}p}{\sqrt{n}}.
\end{align*}

\end{proof}

\paragraph{Proof of Theorem \ref{concentration}}
\begin{proof}
Let $g((\bX_i)_{i \in [n]}) = \sup_{\bTheta \subset B(\xi_P + \|\tth\|_F)} P_n \tilde{f}_{\bTheta} - P \tilde{f}_{\bTheta}$. It follows that
\[g((\bX_i)_{i \in [n]}) = \left[g((\bX_i)_{i \in [n]}) - \E g((\bX_i)_{i \in [n]}) \right]+ \E g((\bX_i)_{i \in [n]}).\]

Following 
\citet{maurer2021concentration}, we bound the first term through concentration inequalities and the second term through symmetrization. It is easy to see that
\begin{equation}
    \label{a1}
    \E g((\bX_i)_{i \in [n]}) \le 2 \mathcal{R}_n(\F)
\end{equation}
Next, consider
\[\mathcal{B} = \{h: \F \to \Real: \sup_{f \in \F} |h(f)| < \infty \}.\]
Clearly, $\mathcal{B}$ is a normed vector space with $\|h\|_{\mathcal{B}} = \sup_{f \in \F} |h(f)|$. For any $\bX_i$, we define a corresponding $Y_i \in \mathcal{B}$ by 
\[Y_i(f) = \frac{1}{n} \left(f(\bX_i) - \E f(\bX_i)\right) \quad f \in \F.\]
Therefore, $\E [Y_i] \equiv 0$, and $g\left((\bX_i)_{i \in [n]}\right) = \|\sum_{i=1}^n Y_i\|_{\mathcal{B}}$. We now note that
\begingroup
\allowdisplaybreaks
\begin{align}
    &\|\|Y_i\|_{\mathcal{B}}\|_{\psi_1} \nonumber\\
    & = \frac{1}{n} \|\sup_{f \in \F} \left( f(\bX_i) - \E f(\bX_i)\right)\|_{\psi_1} \nonumber\\
    & = \frac{1}{n} \|\sup_{f \in \F} \E \left( f(\bX_i) - \E f(\bX_i^\prime)|(\bX_i)_{i \in [n]} \right)\|_{\psi_1} \nonumber\\
    & = \frac{1}{n} \|\sup_{\bTheta \subset B(\xi_P + \|\tth\|_F)} \E \left( \tilde{f}_{\bTheta}(\bX_i) -  \tilde{f}_{\bTheta}(\bX_i^\prime)|(\bX_i)_{i \in [n]} \right)\|_{\psi_1} \nonumber\\
    & \le  \frac{k^{-1/s}}{n} \|\sup_{\bTheta \subset B(\xi_P + \|\tth\|_F)} \sum_{j=1}^k \E \left(d_{\phi}(\bX_i,\btheta_j) -  d_{\phi}(\bX_i^\prime,\btheta_j)|(\bX_i)_{i \in [n]} \right)\|_{\psi_1} \nonumber\\
     & =  \frac{k^{-1/s}}{n} \|\sup_{\bTheta \subset B(\xi_P + \|\tth\|_F)} \sum_{j=1}^k \E \big((\phi(\bX_i) - \phi(\bX_i^\prime))  - \langle \nabla \phi(\btheta_j) , \bX_i - \bX_i^\prime \rangle |(\bX_i)_{i \in [n]} \big)\|_{\psi_1} \nonumber\\
     & \le    \frac{k^{1-1/s}}{n} \| \E \left(|(\phi(\bX_i) - \phi(\bX_i^\prime)| \big|(\bX_i)_{i \in [n]}\right)\|_{\psi_1} + \frac{k^{-1/s}}{n}\|\sup_{\bTheta \subset B(\xi_P + \|\tth\|_F)} \sum_{j=1}^k |\langle \nabla \phi(\btheta_j) , \bX_i - \bX_i^\prime \rangle |\big|(\bX_i)_{i \in [n]} \big)\|_{\psi_1} \nonumber\\
     & \le    \frac{k^{1-1/s}}{n} \| (\phi(\bX_i) - \phi(\bX_i^\prime) \|_{\psi_1} + \frac{k^{-1/s}}{n}\|\sup_{\bTheta \subset B(\xi_P + \|\tth\|_F)} \E \sum_{j=1}^k \| \nabla \phi(\btheta_j)\|_2  \|\bX_i - \bX_i^\prime\|_2\big|(\bX_i)_{i \in [n]} \big)\|_{\psi_1} \label{a2}  \\
     & \le    \frac{2 k^{1-1/s}}{n} \| \phi(\bX)  \|_{\psi_1} \nonumber + \frac{k^{1-1/s} \tau_2 (\xi_P + \|\tth\|_F)}{n}\|\sup_{\bTheta \subset B(\xi_P + \|\tth\|_F)}  \E \|\bX_i - \bX_i^\prime\|_2\big|(\bX_i)_{i \in [n]} \big)\|_{\psi_1} \nonumber\\
     & \le    \frac{2 k^{1-1/s}}{n} \| \phi(\bX)  \|_{\psi_1} + \frac{2 k^{1-1/s} \tau_2 (\xi_P + \|\tth\|_F)}{n}\|   \|\bX\|_2\|_{\psi_1} \nonumber\\
     & = \frac{2 k^{1-1/s}}{n} (\| \phi(\bX)  \|_{\psi_1} +  \tau_2 (\xi_P + \|\tth\|_F) \|   \|\bX\|_2\|_{\psi_1}) \nonumber.
\end{align}
\endgroup
Inequality \ref{a2} follows from Lemma 6 of \cite{maurer2021concentration}. Thus, by Proposition $7(ii)$ of \cite{maurer2021concentration}, 
\begingroup
\allowdisplaybreaks
\begin{align}
     g((\bX_i)_{i \in [n]}) - \E g((\bX_i)_{i \in [n]})  
    & \, \, \le \, \, 8 e \|g((\bX_i)_{i \in [n]})\|_{\psi_1} \sqrt{\frac{2 \log (1/\delta)}{n}} \nonumber\\
    & \le \, \, 8 e \|\|\sum_{i=1}^n Y_i\|_{\mathcal{B}}\|_{\psi_1} \sqrt{\frac{2 \log (1/\delta)}{n}} \nonumber\\
    & \le \, \,  8 e \sum_{i=1}^n \| \| Y_i\|_{\mathcal{B}}\|_{\psi_1} \sqrt{\frac{2 \log (1/\delta)}{n}} \nonumber\\
    & \le \, \, 16 e  k^{1-1/s}(\| \phi(\bX)  \|_{\psi_1} +  \tau_2 (\xi_P + \|\tth\|_F) \|   \|\bX\|_2\|_{\psi_1}) \sqrt{\frac{2 \log (1/\delta)}{n}}. \nonumber\\
\end{align}
Recalling the definition $$g((\bX_i)_{i \in [n]}) = \sup_{\bTheta \subset B(\xi_P + \|\tth\|_F)} P_n \tilde{f}_{\bTheta} - P \tilde{f}_{\bTheta},$$ with probability at least $1-\delta$,
\begin{align*}&  g((\bX_i)_{i \in [n]})
   \le 2 \mathcal{R}_n(\F)  +16 e  k^{1-1/s}(\| \phi(\bX)  \|_{\psi_1} +  \tau_2 (\xi_P + \|\tth\|_F) \|   \|\bX\|_2\|_{\psi_1}) \sqrt{\frac{2 \log (1/\delta)}{n}}.
\end{align*}
\endgroup
Similarly, we can show that with probability at least $1-\delta$,
\begin{align*}
    & \sup_{\bTheta \subset B(\xi_P + \|\tth\|_F)} P \tilde{f}_{\bTheta} - P_n \tilde{f}_{\bTheta} \le 2 \mathcal{R}_n(\F)  +16 e  k^{1-1/s}(\| \phi(\bX)  \|_{\psi_1} +  \tau_2 (\xi_P + \|\tth\|_F) \|   \|\bX\|_2\|_{\psi_1}) \sqrt{\frac{2 \log (1/\delta)}{n}}.
\end{align*}
Combining the above and replacing $\delta$ by $\delta/2$, we get that with probability at least $1-\delta$,
\begin{align*}
    & \sup_{\bTheta \subset B(\xi_P + \|\tth\|_F)} |P_n \tilde{f}_{\bTheta} - P \tilde{f}_{\bTheta}| \le \mathcal{R}_n(\F)  +16 e  k^{1-1/s}(\| \phi(\bX)  \|_{\psi_1} +  \tau_2 (\xi_P + \|\tth\|_F) \|   \|\bX\|_2\|_{\psi_1}) \sqrt{\frac{2 \log (2/\delta)}{n}}.
\end{align*}
Now by bounding $\mathcal{R}_n(F)$ using Theorem \ref{rad}, we obtain our desired result.
\end{proof}

\section{Proof of Theorem \ref{er}}
\begin{proof}
We note the following:
\begin{align}
     \mathfrak{R}(\hth) & = P \tilde{f}_{\hth} - P \tilde{f}_{\tth} \nonumber\\
    & \le P \tilde{f}_{\hth} - P_n \tilde{f}_{\hth} + P_n \tilde{f}_{\hth}- P \tilde{f}_{\tth} \nonumber\\
    & \le P \tilde{f}_{\hth} - P_n \tilde{f}_{\hth} + P_n \tilde{f}_{\tth}- P \tilde{f}_{\tth} \nonumber\\
    & \le 2 \sup_{\hth \subset B(\xi_P + \|\tth\|_F)} |P_n \tilde{f}_{\bTheta} - P f_{\bTheta}|  \label{r1}\\
    & \le 24 \tau_2  \sqrt{2 \pi     \sum_{i = 1}^n \big( 18 \xi_P^2  + 18  \|\tth\|_F^2 + \E\|\bX\|_2^2 \big)}(\xi_P + \|\tth\|_F) \frac{k^{3/2 - 1/s}p}{\sqrt{n}} \nonumber\\
 & + 32 e  k^{1-1/s}\left[ \sigma_\phi+ \sigma \tau_2 (\xi_P + \|\tth\|_F) \|  \right] \sqrt{\frac{2 \log (2/\delta)}{n}} \label{r2}.
\end{align}
Here, \eqref{r1} holds with probability at least $1 - e^{-cn}$ by Theorem \ref{t1} and \eqref{r2} holds with probability at least $1-\delta$ by a simple application of Theorem \ref{concentration}.
\end{proof}

\paragraph{Proof of Theorem \ref{consistency}}
\begin{proof}
For simplicity of notations, let $C_1 = \max \left\{24 \tau_2  C^\prime (\xi_P + \|\tth\|_F) k^{3/2 - 1/s}p, 32 e \sigma \tau_2 k^{1-1/s}(1 + \xi_P + \|\tth\|_F  )\right\}$. From Theorem \ref{er}, we know that with probability at least $1-\delta-e^{-cn}$, 
\begin{equation}\label{q4}
  \mathfrak{R}(\hth) \le \frac{C_1}{\sqrt{n}} + C_1 \sqrt{\frac{\log (2/\delta)}{n}}  .
\end{equation}
Now, fix $\epsilon>0$: if $n \ge \max\left\{\frac{4C_1^2}{\epsilon^2}, \frac{\epsilon^4}{4 C_1^4}\right\}$ and $\delta = 2 \exp\left(-\frac{\sqrt{n}\epsilon^2}{2C^2}  \right)$, the RHS of \eqref{q4} becomes no bigger than $\epsilon$. Thus,
\[\sP \left( |P\tilde{f}_{\hth} - P \tilde{f}_{\bTheta^\ast}| > \epsilon \right) \le 2 \exp\left(-\frac{\sqrt{n}\epsilon^2}{2C^2}  \right), \quad \forall \, n \ge 4C^2/\epsilon^2 .\]
Since the series $\sum_{n=1}^\infty \exp\left(-\frac{\sqrt{n}\epsilon^2}{2C^2}  \right) $ is convergent from the above equation, so is $\sP \left( |P\tilde{f}_{\hth} - P \tilde{f}_{\bTheta^\ast}| > \epsilon \right)$. Hence, $P\tilde{f}_{\hth} \xrightarrow{a.s.} P \tilde{f}_{\bTheta^\ast}$. Thus, for any $\epsilon>0$, it follows that $P \tilde{f}_{\hth} \le  p \tilde{f}_{\bTheta^\ast} + \epsilon$ almost surely with respect to $[P]$ for $n$ sufficiently large. From assumption A~\ref{ass3}, for any fixed $\eta>0$ and $n$ large, we also know 
$\text{dist}(\hth,\bTheta^\ast) \le  \eta$ almost surely with respect to $[P]$. Together, $\text{dist}(\hth, \bTheta^\ast) \xrightarrow{a.s.}0$, which proves the result.
\end{proof}

\section{Assumption of sub-Gaussianity of $\|\bX\|_2$}
\label{subg}
In this section, we show that a concrete sufficient condition for A~\ref{ass1} to be satisfied is that $\|\bX\|_2$ is sub-Gaussian. We state and prove this result in the following theorem.

\begin{theorem}
If $\|\bX\|_2$ is sub-Gaussian then under assumption A~\ref{ass2}, 
\begin{itemize}
    \item[(a)] $\|\|\bX\|_2\|_{\psi_1} < \infty$.
    \item[(b)] $\|\phi(\bX)\|_{\psi_1} < \infty$.
\end{itemize}
\end{theorem}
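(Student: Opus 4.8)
The plan is to treat the two claims separately: part (a) follows by directly comparing the two moment-based Orlicz-type norms, while part (b) requires first extracting a quadratic growth bound on $\phi$ from the smoothness half of A~\ref{ass2} and then invoking that the square of a sub-Gaussian variable is sub-exponential. Throughout I write $K = \|\|\bX\|_2\|_{\psi_2}$, defined analogously to the $\psi_1$ norm in A~\ref{ass1} but with $\sqrt{p}$ in place of $p$, so that sub-Gaussianity of $\|\bX\|_2$ reads $K = \sup_{p \in \mathbb{N}} (\E\|\bX\|_2^p)^{1/p}/\sqrt{p} < \infty$.

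For part (a), I would observe that the sub-exponential norm is dominated by the sub-Gaussian norm: for every $p \ge 1$,
\[
\frac{(\E\|\bX\|_2^p)^{1/p}}{p} = \frac{1}{\sqrt{p}}\cdot\frac{(\E\|\bX\|_2^p)^{1/p}}{\sqrt{p}} \le \frac{K}{\sqrt{p}} \le K.
\]
Taking the supremum over $p$ gives $\|\|\bX\|_2\|_{\psi_1} \le K < \infty$. This is simply the elementary fact that sub-Gaussian random variables are sub-exponential, read off directly from the moment definitions in A~\ref{ass1}.

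For part (b), the key step is to bound $|\phi(\bX)|$ by a quadratic in $\|\bX\|_2$. Using the normalization $\phi(\mathbf{0}) = \nabla\phi(\mathbf{0}) = 0$ adopted in the paper, convexity of $\phi$ supplies the lower bound $\phi(\bx) \ge \phi(\mathbf{0}) + \langle\nabla\phi(\mathbf{0}),\bx\rangle = 0$, while $\tau_2$-Lipschitzness of $\nabla\phi$ supplies, through the standard descent lemma, the upper bound $\phi(\bx) \le \phi(\mathbf{0}) + \langle\nabla\phi(\mathbf{0}),\bx\rangle + \tfrac{\tau_2}{2}\|\bx\|_2^2 = \tfrac{\tau_2}{2}\|\bx\|_2^2$. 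Hence $|\phi(\bX)| \le \tfrac{\tau_2}{2}\|\bX\|_2^2$ pointwise, and by monotonicity of $p \mapsto (\E|\cdot|^p)^{1/p}$ under pointwise domination, $\|\phi(\bX)\|_{\psi_1} \le \tfrac{\tau_2}{2}\|\,\|\bX\|_2^2\,\|_{\psi_1}$.

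It then remains to control $\|\,\|\bX\|_2^2\,\|_{\psi_1}$. Writing $(\E\|\bX\|_2^{2p})^{1/p} = \bigl((\E\|\bX\|_2^{2p})^{1/(2p)}\bigr)^2$ and applying sub-Gaussianity at the even index $2p$, namely $(\E\|\bX\|_2^{2p})^{1/(2p)} \le \sqrt{2p}\,K$, gives $(\E\|\bX\|_2^{2p})^{1/p}/p \le 2K^2$, so that $\|\,\|\bX\|_2^2\,\|_{\psi_1} \le 2K^2$. Combining yields $\|\phi(\bX)\|_{\psi_1} \le \tau_2 K^2 < \infty$, establishing (b). I expect the only substantive step to be the passage from ``$\nabla\phi$ is $\tau_2$-Lipschitz'' to the quadratic envelope $|\phi(\bx)| \le \tfrac{\tau_2}{2}\|\bx\|_2^2$ via the descent lemma; once that envelope is in hand, everything reduces to the well-known fact, expressed here in the present moment notation, that squaring a sub-Gaussian variable produces a sub-exponential one.
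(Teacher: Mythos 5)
Your proof is correct and follows essentially the same route as the paper's: part (a) is the identical comparison of the $\psi_1$ and $\psi_2$ moment norms, and part (b) rests on the same quadratic envelope $|\phi(\bx)| \le \tfrac{\tau_2}{2}\|\bx\|_2^2$ followed by the standard sub-Gaussian-squared-is-sub-exponential moment computation at index $2p$. The only (cosmetic) difference is that you derive the envelope directly from the descent lemma together with convexity and the normalization $\phi(\mathbf{0})=\nabla\phi(\mathbf{0})=0$, whereas the paper writes $\phi(\bX)=d_\phi(\bX,\mathbf{0})$ and cites Lemma B.1 of Telgarsky and Dasgupta (2013) for the same smoothness bound.
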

\begin{proof}
\textit{(a)}  We know that \( \|\|\bX\|_2\|_{\psi_2} \triangleq  \sup_{p \in \mathbb{N}}\frac{(\E \|\bX\|_2^p)^{1/p}}{\sqrt{p}} < \infty\) from Proposition 2.5.2 of \cite{vershynin2018high}.  Thus, 
\[\|\|\bX\|_2\|_{\psi_1} =  \sup_{p \in \mathbb{N}}\frac{(\E \|\bX\|_2^p)^{1/p}}{p} \le  \sup_{p \in \mathbb{N}}\frac{(\E \|\bX\|_2^p)^{1/p}}{\sqrt{p}} < \infty.\]

\textit{(b)}
We note that
\begin{align}
    \|\phi(\bX)\|_{\psi_1}& =  \sup_{p \in \mathbb{N}}\frac{(\E |\phi(\bX)|^p)^{1/p}}{p} \nonumber\\
    & = \sup_{p \in \mathbb{N}}\frac{(\E |d_\phi(\bX,\mathbf{0})|^p)^{1/p}}{p} \nonumber\\
    & \le \sup_{p \in \mathbb{N}}\frac{(\E \tau_2 \|\bX\|_2^{2p})^{1/p}}{p}\label{sg1}\\
    & = \tau_2 \sup_{p \in \mathbb{N}}\frac{(\E \|\bX\|_2^{2p})^{1/p}}{p} \nonumber\\
    & = \tau_2 \sup_{p \in \mathbb{N}}\left(\frac{(\E \|\bX\|_2^{2p})^{1/2p}}{\sqrt{p}}\right)^2 \nonumber\\
    & = 2 \tau_2  \left(\sup_{p \in \mathbb{N}}\frac{(\E \|\bX\|_2^{2p})^{1/2p}}{\sqrt{2p}}\right)^2 \nonumber\\
    & \le 2 \tau_2 \|\|\bX\|_2\|_{\psi_2}^2 < \infty. \nonumber
\end{align}
Inequality \eqref{sg1} follows from Lemma B.1 appearing in the Supplement of \citet{telgarsky2013moment}.
\end{proof}

\end{document}